\def\non{\noindent}
\def\vk{\vskip0.1cm}
\newcommand{\bz}{\mathbf{z}}
\newcommand{\bsw}{\boldsymbol{w}}
\newcommand{\bsx}{\boldsymbol{x}}
\newcommand{\bsX}{\boldsymbol{X}}
\newcommand{\bsy}{\boldsymbol{y}}
\newcommand{\bsY}{\boldsymbol{Y}}
\newcommand{\cX}{\mathcal{X}}
\newcommand{\cD}{\mathcal{D}}
\newcommand{\bsbeta}{\boldsymbol{\beta}}
\newcommand{\bstheta}{\boldsymbol{\theta}}
\newcommand{\E}{\mathbb{E}}
\newcommand{\Pro}{\mathbb{P}}
\newcommand{\R}{\mathbb{R}}
\newcommand{\N}{\mathbb{N}}
\newtheorem{theorem}{Theorem}[section]
\newtheorem{lemma}{Lemma}[section]
\theoremstyle{definition}
\newtheorem{definition}{Definition}[section]
\begin{document}
\begin{frontmatter}

\title{Regularized Maximum Likelihood Estimation and Feature Selection in Mixtures-of-Experts Models}
\runtitle{Regularized Estimation and Feature Selection in Mixtures of Experts}
\alttitle{Estimation par maximum de vraisemblance r\'egularis\'e et s\'election de variables dans les mod{\`e}les de m\'elanges d'experts}

\begin{aug}
  \auteur{%
  \prenom{Faicel}
  \nom{Chamroukhi}%
    \thanksref{t1}
    \contact[label=e1]{faicel.chamroukhi@unicaen.fr}}
  \and%
  \auteur{%
  \prenom{Bao-Tuyen}
  \nom{Huynh}%
  \thanksref{t2}
  \contact[label=e2]{bao-tuyen.huynh@unicaen.fr}}
  \affiliation[t1]{Normandie Univ, UNICAEN, UMR CNRS LMNO, Dpt of Mathematics and Computer Science, 14000 Caen, France\\ \printcontact{e1}}

  \affiliation[t2]{Normandie Univ, UNICAEN, UMR CNRS LMNO, Dpt of Mathematics and Computer Science, 14000 Caen, France. \\ 
  \printcontact{e2}} 
  \runauthor{Chamroukhi, Huynh}
\end{aug}

\begin{abstract}
Mixture of Experts (MoE) are successful models for modeling heterogeneous data in many statistical learning problems including regression, clustering and classification. Generally fitted by maximum likelihood estimation via the well-known EM algorithm, their application to high-dimensional problems is still therefore challenging. We consider the problem of fitting and feature selection in MoE models, and propose a regularized maximum likelihood estimation approach that encourages sparse solutions for heterogeneous regression data models with potentially high-dimensional predictors. 
Unlike state-of-the art regularized MLE for MoE, the proposed modelings do not require an approximate of the penalty function. 
We develop two hybrid EM algorithms: an Expectation-Majorization-Maximization (EM/MM) algorithm, and an EM algorithm with coordinate ascent algorithm.  The proposed algorithms allow to automatically obtaining sparse solutions without thresholding, and avoid matrix inversion by allowing univariate parameter updates. An experimental study shows the good performance of the algorithms in terms of recovering the actual sparse solutions, parameter estimation, and clustering of heterogeneous regression data.
\end{abstract}

\begin{keywords}
\kwd{Mixture of experts} 
\kwd{Model-based clustering}
\kwd{Feature selection}
\kwd{Regularization}
\kwd{EM algorithm} 
\kwd{Coordinate ascent}
\kwd{MM algorithm}
\kwd{High-dimensional data}
\end{keywords}

\begin{altabstract}
Les m\'elanges d'experts (MoE) sont des mod\`eles efficaces pour la mod\'elisation de donn\'ees h\'et\'erog\`enes dans de nombreux probl\`emes en apprentissage statistique, y compris en r\'egression, en classification et en discrimination. G\'en\'eralement ajust\'es par maximum de vraisemblance via l'algorithme EM, leur application aux probl\'emes de grande dimension est difficile dans un tel contexte. Nous consid\'erons le probl\`eme de l'estimation et de la s\'election de variables dans les mod\`eles de m\'elanges d'experts, et proposons une approche d'estimation par maximum de vraisemblance r\'egularis\'e qui encourage des solutions parcimonieuses pour des mod\'eles de donn\'ees de r\'egression h\'et\'erog\`enes comportant un nombre de pr\'edicteurs potentiellement grand.
La m\'ethode de r\'egularisation propos\'ee, contrairement  aux m\'ethodes 
 de l'\'etat de l'art sur les m\'elanges d'experts, ne se base pas sur une p\'enalisation approch\'ee et ne n\'ecessite pas de seuillage pour retrouver la solution parcimonieuse. 
L'estimation parcimonieuse des param\`etres s'appuie sur une r\'egularisation de l'estimateur du maximum de vraisemblance pour les experts et les fonctions d'activations, mise en {\oe}uvre par deux versions d'un algorithme EM hybride.  
L'\'etape M de l'algorithme, effectu\'ee par mont\'ee de coordonn\'ees ou par un algorithme MM, 
\'evite l'inversion de matrices dans la mise \`a jour et rend  ainsi prometteur le passage de l'algorithme \`a l'\'echelle. Une \'etude exp\'erimentale 
 met en \'evidence de bonnes performances de l'approche propos\'ee. 
\end{altabstract}

\begin{altkeywords}
\kwd{M\'elanges d'experts} 
\kwd{Classification \'a base de mod\'ele}
\kwd{S\'election de variable}
\kwd{R\'egularisation}
\kwd{Algorithme EM} 
\kwd{Mont\'ee de coordonn\'ees}
\kwd{Algorithme MM}
\kwd{Donn\'ees de grande dimension}
\end{altkeywords}

\begin{AMSclass}
\kwd{62-XX}
\kwd{62H30}
\kwd{62G05}
\kwd{62G07}
\kwd{62H12}
\kwd{62-07}
\kwd{62J07}
\kwd{68T05}
\end{AMSclass}
\end{frontmatter}

\section{Introduction}

Mixture of experts (MoE) models introduced by \cite{Jac91} are successful for modeling heterogeneous data in statistics and machine learning problems including regression, clustering and classification.  MoE belong to the family of mixture models \citep{TitteringtonBookMixtures,McLachlan2000FMM,SylviaFruhwirthBook2006}  and is a fully conditional mixture model where both the mixing proportions, i.e, the gating network, and the components densities, i.e, the experts network, depend on the inputs. 
A general review of the MoE models and  their applications can be found in \cite{NguyenChamroukhi-MoE}. 
While the MoE modeling with maximum likelihood estimation (MLE) is widely used, its application in high-dimensional problems is still challenging due to the well-known problem of the ML estimator in such a setting. Indeed,  in high-dimensional setting, the features can be correlated and thus the actual features that explain the problem reside in a low-dimensional space. Hence, there is a need to select a subset of the potentially large number of features, that really explain the data. 
To avoid singularities and degeneracies of the MLE as highlighted namely in \cite{Stephens1997,Snoussi2005,Fraley2005, Fraley2007}, one can regularize the likelihood through a prior distribution over the model parameter space. 
A better fitting can therefore be achieved by regularizing the objective function so that to encourage sparse solutions.  
However, feature selection by regularized inference encourages sparse solutions, while having a reasonable computational cost. Several approaches have been proposed to deal with the feature selection task, both in regression and in clustering. 

For regression, the well-known Lasso method \citep{Tib96}  is one of the most popular and successful regularization technique which utilizes the $\ell_1$ penalty to regularize the squared error function, or by equivalence the log-likelihood in Gaussian regression, and to achieve parameter estimation and feature selection. This allows to shrink coefficients toward zero, and can also set many coefficients to be exactly zero. 
%
%
While the problem of feature selection and regularization is more popular in this supervised learning context, it has took an increasing interest in the unsupervised context, namely in clustering, as in \cite{Witten2010} where a sparse $K$-means algorithm is introduced for clustering high-dimensional data using a Lasso-type penalty to select the features, including in model-based clustering.  
In that context, \cite{Pan2007} considered the problem of fitting mixture of Gaussians by maximizing a penalized log-likelihood with an $\ell_1$ penalty over the mean vectors. This allows to shrink some variables in the mean vectors to zero and to provide a sparse mixture model with respect to the means and thus to perform the clustering in a low-dimensional space. 
\cite{Maugis2009b} proposed the SRUW model, by relying on the role of the variables in clustering and by distinguishing between relevant variables and irrelevant variables to clustering. In this approach,
the feature selection problem is considered as a model selection problem for model-based clustering, by maximizing a BIC-type criterion given a collection of models. The drawback of this approach is that it is time demanding for high-dimensional data sets. To overcome this drawback, \cite{Celeux2017} proposed an alternative variable selection procedure in two steps. First, the variables are ranked through a Lasso-like procedure, by an $\ell_1$ penalties for the mean and the covariance matrices. Then their roles are determined by using the SRUW model. 
Other interesting approaches for feature selection in model-based clustering for high-dimensional data can be found in \cite{Law04,Raftery06,Maugis2009a}.\vk
In related mixture models for simultaneous regression and clustering, including mixture of linear regressions (MLR), where the mixing proportions are constant, 
 \cite{Kha07} proposed regularized ML inference, including MIXLASSO, MIXHARD and MIXSCAD and provided asymptotic properties corresponding to these penalty functions. Another $\ell_1$ penalization for MLR models for high-dimensional data was proposed by \cite{Sta10} which uses an adaptive Lasso penalized estimator. 
 An efficient EM algorithm with provable convergence properties has been introduced for the optimization
variable selection.
\cite{Mey13} provided an $\ell_1$-oracle inequality for a Lasso estimator in finite mixture of Gaussian regression models. 
 This result can be seen as a complementary result to \cite{Sta10}, by studying the $\ell_1$-regularization properties of the Lasso in parameter estimation, rather than by considering it as a variable selection procedure. 
 This work was extended later in \cite{Dev15} by considering a mixture of multivariate Gaussian regression models. When the set of features can be structued in the form of groups, \cite{Hui15} introduced the two types of penalty functions called MIXGL1 and MIXGL2 for MLR models, based on the structured regularization of the group Lasso. %
 A MM algorithm \cite{Lan13} for MLR with Lasso penalty can be found in \cite{Llo16}, which allows to avoid matrix operations.
 In \cite{Kha10}, the author extended his MLR regularization to the MoE setting, and provided a root-$n$ consistent and oracle properties for Lasso and SCAD penalties, and developed an EM algorithm for fitting the models. However, as we will discuss it in Section \ref{sec: RMoE}, this is based on approximated penalty function, and uses a Newton-Raphson procedure in the updates of the gating network parameters, and thus requires matrix inversion. 

In this paper, we consider the regularized MLE and clustering in MoE models as in \cite{Kha10}. We propose a new regularized maximum likelihood estimation approach with two hybrid algorithms for maximizing the proposed objective function. The proposed algorithms for fitting the model consist of an Expectation-Majorization-Maximization (EMM) algorithm and an EM algorithm with a coordinate ascent algorithm.   
The proposed approach does not require an approximate of the regularization term, and the two developed hybrid algorithms, allow to automatically select sparse solutions without thresholding. 

The remainder of this paper is organized as follows. 
In Section \ref{sec: MoE} we present the modeling with MoE for heterogeneous data. Then, in Section \ref{sec: RMoE}, we present, the regularized maximum likelihood strategy of the MoE model, and the two proposed EM-based algorithms. 
An experimental study, carried out on simulated and two real data sets, are given in Section \ref{sec: Experiments}. In Section \ref{sec:Discuss}, we discuss the effectiveness of our method in dealing with moderate dimensional problems, and consider an experiment which promotes its use in high-dimensional scenarios. Finally, in Section \ref{Sec:Con}, we draw concluding remarks and mention future direction.

\section{Modeling with Mixture of Experts (MoE)}
\label{sec: MoE}

Let $((\bsX_1,\bsY_1),\ldots,(\bsX_n,\bsY_n))$ be a random sample of $n$  independently and identically distributed (i.i.d) pairs $(\bsX_i,\bsY_i)$, ($i=1,\ldots, n$) 
where $Y_i\in \cX \subset \R^d$ is the $i$th response  given some vector of predictors $\bsX_i \in \cX \subset \R^p$.
We consider the MoE modeling  for the analysis of a heteregeneous set of such data.
Let $\cD = ((\bsx_1,y_1),\ldots,(\bsx_n,y_n))$ be an observed data sample. 

\subsection{The model}
The mixture of experts model assumes that the observed pairs $(\bsx,\bsy)$ are generated from $K\in \N$ (possibly unknown) tailored probability density components (the experts) governed by a hidden categorical random variable $Z\in [K]=\{1,\ldots,K\}$ that indicates the component from which a particular observed pair is drawn. The latter represents the gating network. 
Formally, the gating network is defined by the distribution of the hidden variable $Z$ given the predictor $\bsx$, i.e., $\pi_k(\bsx;\bsw) = \Pro(Z=k|\bsX=\bsx;\bsw)$, which is in general given by gating softmax functions of the form:
\begin{eqnarray}
\pi_k(\bsx_i;\bsw) = \Pro(Z_i=k|\bsX_i=\bsx_i;\bsw)
 = \frac{\exp(w_{k0}+\bsx_i^T \bsw_k)}{1 + \sum\limits_{l=1}^{K-1}\exp(w_{l0}+\bsx_i^T \bsw_l)}
\label{eq.softmax}
\end{eqnarray}for $k=1,\hdots,K-1$ with $(w_{k0}, \bsw^T_k) \in \R^{p+1}$ and  $(w_{K0}, \bsw^T_K)=(0,\mathbf{0})$ for identifiability \cite{Jia99}.
The experts network is defined by the conditional densities  $f(\bsy_i|\bsx_i;\bstheta_k)$ which is the short notation of $f(\bsy_i|\bsX=\bsx,Z_i=k;\bstheta)$. 
The MoE thus decomposes the probability density of the observed data as a convex sum of a finite experts weighted by a softmax gating network, and  can be defined by the following semi-parametric probability density (or mass) function: %
\begin{equation}
f(\bsy_i|\bsx_i; \bstheta) = \sum_{k=1}^K \pi_k(\bsx_i;\bsw) f(\bsy_i|\bsx_i;\bstheta_k)
\label{eq.MoE}
\end{equation}that is parameterized by the parameter vector defined by 
$\bstheta = (\bsw^T_1,\ldots,\bsw^T_{K-1},\bstheta^T_1,\ldots,\bstheta^T_K)^T \in \R^{\nu_{\bstheta}}$ ($\nu_{\bstheta} \in \N$) 
where 
 $\bstheta_k$ ($k=1,\ldots,K$) is the parameter vector of the $k$th expert. 
\\
%
For a complete account of MoE, types of gating networks and experts networks, the reader is referred to \cite{NguyenChamroukhi-MoE}. 

The generative process of the data assumes the following hierarchical representation. 
First, given the predictor $\bsx_i$, the categorical variable $Z_i$ follows the multinomial distribution:
\begin{equation}
Z_i|\bsx_i \sim \text{Mult}(1;\pi_1(\bsx_i;\bsw),\hdots,\pi_K(\bsx_i;\bsw)) 
\label{eq.Z generation in MoE}
\end{equation}where each of the probabilities $\pi_{z_i}(\bsx_i;\bsw) = \mathbb{P}(Z_i = z_i|\bsx_i)$ is given by the multinomial logistic function (\ref{eq.softmax}). 
Then, conditional on the hidden variable $Z_i = z_i$, given the covariate $\bsx_i$, a random variable $Y_i$ is assumed to be generated according to the following representation
\begin{equation}
\bsY_i|Z_i = z_i, \bsX_i=\bsx_i   \sim p(\bsy_i|\bsx_i;\bstheta_{z_i})
\label{eq.MoE generative model}
\end{equation}where $p(\bsy_i|\bsx_i;\bstheta_k) = p(\bsy_i|Z_i = z_i,\bsX_i=\bsx_i;\bstheta_{z_i})$ is the probability density or the probability mass function of the expert $z_i$ depending on the nature of the data ($\bsx,\bsy$) within the group $z_i$. 
In the following, we consider MoE models for regression and clustering of continuous data. 
%
%
Consider the case of univariate continuous outputs $Y_i$. A common choice to model the relationship between the input $\bsx$ and the output $Y$ is by considering regression functions. 
Thus, within each homogeneous group $Z_i=z_i$, the response $Y_i$, given the expert $k$, is modeled by the  noisy linear model:
$Y_i = \beta_{z_i0} + \bsbeta^T_{z_i}\bsx_i + \sigma_{z_i}\varepsilon_i$,
where the $\varepsilon_i$ are 
standard i.i.d zero-mean unit variance Gaussian noise variables, the bias coefficient
$\bsbeta_{k0}\in \R$ and $\bsbeta_k \in \R^p$ are the usual unknown regression coefficients describing the expert $Z_i=k$, 
and $\sigma_k >0$ corresponds to the standard deviation of the noise. In such a case, 
the generative model (\ref{eq.MoE generative model})  of $Y$ becomes
\begin{equation}
Y_i|Z_i = z_i,\bsx_i \sim 
\mathcal{N}(.;\beta_{z_i0} + \bsbeta^T_{z_i}\bsx_i, \sigma_{z_i}^2)\cdot
\label{eq.NMoE regression generative model}
\end{equation}

\subsection{Maximum likelihood parameter estimation}
\label{ssec: EM-MoE}
Assume that, $\cD = ((\bsx_1,\bsy_1),\ldots,(\bsx_n, \bsy_n))$ is an observed data sample generated from the MoE (\ref{eq.MoE}) with unknown parameter $\bstheta$. The parameter vector $\bstheta$ is commonly estimated by maximizing the observed data log-likelihood
$\log L(\bstheta) = \sum_{i=1}^{n}\log \sum_{k=1}^{K} \pi_k(\bsx_i;\bsw) f(\bsy_{i}|\bsx_i;\bstheta_{k})$ 
by using the EM algorithm \citep{Dem77, Jac91} 
which allows to iteratively find an appropriate local maximizer of the log-likelihood function.
%
In the considered model for Gaussian regression,  the maximized log-likelihood is given by
\begin{equation}
\log L(\bstheta) = \sum_{i=1}^n\log\Bigl[\sum_{k=1}^K \pi_k(\bsx_i;\bsw)\mathcal{N}(y_i; \beta_{k0} +\bsbeta^T_k \bsx_i,\sigma_k^2)\Bigl].
\label{eq.log-lik NMoE regression}
\end{equation}
However, it is well-known that the MLE may be unstable of even infeasible in high-dimension namely due to possibly redundant and correlated features. In such a context, a regularization of the MLE is needed.

%

\section{Regularized Maximum Likelihood parameter Estimation of the MoE}
\label{sec: RMoE}
Regularized maximum likelihood estimation allows the selection of a relevant subset of features for prediction and thus encourages sparse solutions. In mixture of experts modeling, one may consider both sparsity in the feature space of the gates, and of the experts. 
We propose to infer the MoE model by maximizing a regularized log-likelihood criterion, which encourages sparsity for both the gating network parameters and the experts network parameters, and does not require any approximation, along with performing the maximization, so that to avoid matrix inversion. 
The proposed regularization combines a Lasso penalty for the experts parameters, and an elastic net like penalty for the gating network, defined by:
\begin{equation}
\label{eq:PenLoglik MoE}
PL(\bstheta) = L(\bstheta) - \sum_{k=1}^K\lambda_k\|\bsbeta_k\|_1 - \sum_{k=1}^{K-1}\gamma_k\|\bsw_k\|_1 - \frac{\rho}{2}\sum_{k=1}^{K-1}\|\bsw_k\|_2^2.
\end{equation}
A similar strategy has been proposed in  \cite{Kha10} where the author proposed a regularized ML function like (\ref{eq:PenLoglik MoE}) but which is then approximated in the EM algorithm of the model inference. The EM algorithm for fitting the model follows indeed the suggestion of \cite{Fan01} to approximate the penalty function 
in a some neighborhood 
by a local quadratic function. Therefore, a Newton-Raphson can be used to update parameters in the M-step. The weakness of this scheme is that once a feature is set to zero, it may never reenter the model at a later stage of the algorithm.  
To avoid this numerical instability of the algorithm due to the small values of some of the features in the denominator of this approximation, \cite{Kha10} replaced that  approximation 
 by an $\epsilon$-local quadratic function. %
Unfortunately, these strategies have some drawbacks. First, by approximating the penalty functions with ($\epsilon$-)quadratic functions, none of the components will be exactly zero. Hence, a threshold should be considered to declare a coefficient is zero, and this threshold affects the degree of sparsity. Secondly, it cannot guarantee the non-decreasing property of the EM algorithm of the penalized objective function. 
Thus, the convergence of the EM algorithm cannot be ensured. One has also to choose $\epsilon$ as an additional tuning parameter in practice. Our proposal overcomes these limitations. 

The $\ell_2$ term penalty is added in our model to take into account  possible strong correlation between
the features $x_j$ which could be translated especially on the coefficients of the gating network $\bsw$ because they are related between the different experts, contrary to the regression coefficients $\bsbeta$.
The resulting combination of $\ell_1$ and $\ell_2$ for $\bsw$ leads to an elastic net-like regularization, which enjoys similar sparsity of representation as the $\ell_1$ penalty.
The $\ell_2$ term is not however essential especially when the main goal is to retrieve the sparsity, rather than to perform prediction.

\subsection{Parameter estimation with block-wise EM}
\label{ssec: EM}
We propose two block-wise EM algorithms to monotonically find at least local maximizers of (\ref{eq:PenLoglik MoE}).
The E-step is common to both algorithms, while in the M-step, two different algorithms are proposed to update the model parameters. More specifically, the first one relies on a MM algorithm, while the second one uses a coordinate ascent to update the gating network $\bsw$ parameters  and the experts network $\bsbeta$' parameters.
The EM algorithm for the maximization of (\ref{eq:PenLoglik MoE}) firstly requires the construction of  the penalized complete-data log-likelihood
{\begin{equation}
\log PL_c(\bstheta) = \log L_c(\bstheta)- \sum_{k=1}^K\lambda_k\|\bsbeta_k\|_1 - \sum_{k=1}^{K-1}\gamma_k\|\bsw_k\|_1 - \frac{\rho}{2}\sum_{k=1}^{K-1}\|\bsw_k\|_2^2
\label{eq:complete log-lik RMoE}
\end{equation}}where
$\log L_c(\bstheta) = \sum_{i=1}^{n}\sum_{k=1}^{K} Z_{ik} \log \left[\pi_k(\bsx_i;\bsw) f(\bsy_i|\bsx_i;\bstheta_k)\right]$ 
is the standard complete-data log-likelihood, $Z_{ik}$ is an indicator binary-valued variable such that $Z_{ik}=1$ if $Z_i=k$ (i.e., if the $i$th pair $(\bsx_i,\bsy_i)$ is generated from the $k$th expert component) and $Z_{ik}=0$ otherwise.
Thus, the EM algorithm for the RMoE in its general form runs as follows. After starting with an initial solution $\bstheta^{[0]}$, it alternates between the two following steps until convergence (e.g., when there is no longer a significant change in the relative variation of the regularized log-likelihood).
\subsection{E-step}
\label{ssec: E-step RMoE} The E-Step computes the conditional expectation of the penalized complete-data log-likelihood (\ref{eq:complete log-lik RMoE}),  given the observed data $\cD$ and a current parameter vector $\bstheta^{[q]}$, $q$ being the current iteration number of the block-wise EM algorithm: 
{\begin{align}
Q(\bstheta;\bstheta^{[q]}) &=  \E\left[\log PL_c(\bstheta)|\cD;\bstheta^{[q]}\right] \nonumber\\
&= \sum_{i=1}^{n}\sum_{k=1}^{K}\tau_{ik}^{[q]} \log \left[\pi_k(\bsx_i;\bsw) f_k(\bsy_{i}|\bsx_i;\bstheta_{k})\right]
- \sum_{k=1}^K\lambda_k\|\bsbeta_k\|_1 - \sum_{k=1}^{K-1}\gamma_k\|\bsw_k\|_1 - \frac{\rho}{2}\sum_{k=1}^{K-1}\|\bsw_k\|_2^2
\label{eq:Q-function RMoE}
\end{align}}where
{\begin{eqnarray}
\tau_{ik}^{[q]} = \Pro(Z_i=k|\bsy_{i},\bsx_i;\bstheta^{[q]})
= \frac{\pi_k(\bsx_i;\bsw^{[q]}) \mathcal{N}(y_i; \beta_{k0}^{[q]} +\bsx^T_i \bsbeta_k^{[q]},\sigma_k^{[q]2})}{\sum\limits_{l=1}^K \pi_l(\bsx_i;\bsw^{[q]})\mathcal{N}(y_i; \beta_{l0}^{[q]} + \bsx^T_i\bsbeta_l^{[q]},\sigma_l^{[q]2})}
\label{eq:RMoE post prob}
\end{eqnarray}}is the conditional probability that the data pair $(\bsx_i,\bsy_i)$ is generated by  the $k$th expert. 
This step therefore only requires  the computation of the conditional component probabilities $\tau^{[q]}_{ik}$ $(i=1,\ldots,n)$ for each of the $K$ experts. 
\subsection{M-step} The M-Step updates the parameters by maximizing the $Q$ function (\ref{eq:Q-function RMoE}), which can be written as
\begin{equation}
Q(\bstheta;\bstheta^{[q]}) = Q(\bsw;\bstheta^{[q]}) + Q(\bsbeta, \sigma;\bstheta^{[q]})
\end{equation}
with
{\begin{equation}\label{QP}
 Q(\bsw;\bstheta^{[q]}) = \sum_{i=1}^n\sum_{k=1}^K\tau_{ik}^{[q]}\log\pi_k(\bsx_i;\bsw)-  \sum_{k=1}^{K-1}\gamma_k\|\bsw_k\|_1 - \frac{\rho}{2}\sum_{k=1}^{K-1}\|\bsw_k\|_2^2,
\end{equation}}
and
{\begin{equation}\label{QnP}
Q(\bsbeta, \sigma;\bstheta^{[q]}) =  \sum_{i=1}^n\sum_{k=1}^K\tau_{ik}^{[q]}\log\mathcal{N}(y_i; \beta_{k0} +\bsx_i^T\bsbeta_k,\sigma_k^2) - \sum_{k=1}^K\lambda_k\|\bsbeta_k\|_1.
\end{equation}}The parameters $\bsw$ are therefore separately updated by maximizing the function 
{\begin{equation}
Q(\bsw;\bstheta^{[q]}) =
\sum_{i=1}^n\sum_{k=1}^{K-1}\tau_{ik}^{[q]}(w_{k0}+\bsx_i^T \bsw_k)- \sum_{i=1}^n\log\Bigl[1+\sum_{k=1}^{K-1}e^{w_{k0}+\bsx_i^T \bsw_k}\Bigl]
 - \sum_{k=1}^{K-1}\gamma_k\|\bsw_k\|_1- \frac{\rho}{2}\sum_{k=1}^{K-1}\|\bsw_k\|_2^2.
\label{eq.Qw}
\end{equation}}We propose and compare two approaches for maximizing (\ref{QP}) based on a MM algorithm and a coordinate ascent algorithm. These approaches have some advantages since they do not use any approximate for the penalty function, and have a separate structure which avoid matrix inversion. 

\subsubsection{MM algorithm for updating the gating network}
In this part, we construct a MM algorithm to iteratively update the gating network parameters $(w_{k0}, \bsw_k)$. At each iteration step $s$ of the MM algorithm, we maximize a minorizing function of the initial function (\ref{eq.Qw}). 
We begin this task by giving the definition of a minorizing function.
\begin{definition} (see \cite{Lan13})
Let $F(x)$ be a function of $x$. A function $G(x|x_m)$ is called a minorizing function of $F(x)$ at $x_m$ iff
$$F(x)\ge G(x|x_m) \text{ and } F(x_m) = G(x_m|x_m),\ \forall x.$$
\end{definition}\non
In the maximization step of the MM algorithm, we maximize the surrogate function $G(x|x_m)$, rather than the function $F(x)$ itself. If $x_{m+1}$ is the maximum of $G(x|x_m)$, then we can show that the MM algorithm forces $F(x)$ uphill, because
$$F(x_m) = G(x_m|x_m)\le G(x_{m+1}|x_m)\le F(x_{m+1}).$$
By doing so, we can find a local maximizer of $F(x)$. If $G(x_m|x_m)$ is well constructed, then we can avoid matrix inversion when maximizing it. Next, we derive the surrogate function for $Q(\bsw;\bstheta^{[q]})$. We start by the following lemma. 
\begin{lemma}\label{LL} If $x > 0$, then the function $f(x) = -\ln(1+x)$ can be minorized by
$$g(x|x_m) = -\ln(1+x_m) - \frac{x-x_m}{1+x_m},\ \text{ at }  x_m>0.$$ 
\end{lemma}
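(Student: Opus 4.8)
The plan is to recognize $g(\cdot\,|x_m)$ as the tangent line to $f(x)=-\ln(1+x)$ at the point $x_m$, and to exploit the strict convexity of $f$ on $(-1,\infty)$, which forces $f$ to lie above each of its tangents. Concretely, $f'(x)=-\tfrac{1}{1+x}$, so the tangent at $x_m$ is $x\mapsto -\ln(1+x_m)-\tfrac{1}{1+x_m}(x-x_m)$, which is exactly $g(x|x_m)$. The equality condition is immediate: substituting $x=x_m$ gives $g(x_m|x_m)=-\ln(1+x_m)=f(x_m)$, so the tangency requirement of the definition of a minorizing function holds.

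For the inequality $f(x)\ge g(x|x_m)$ I would use the elementary bound $\ln t\le t-1$, valid for all $t>0$ (a consequence of the concavity of $\ln$, with equality iff $t=1$). Applying it with $t=(1+x)/(1+x_m)>0$ gives
\[
\ln(1+x)-\ln(1+x_m)=\ln\frac{1+x}{1+x_m}\le\frac{1+x}{1+x_m}-1=\frac{x-x_m}{1+x_m},
\]
and rearranging yields $-\ln(1+x)\ge -\ln(1+x_m)-\frac{x-x_m}{1+x_m}=g(x|x_m)$, as required. An equivalent route is to set $h(x)=f(x)-g(x|x_m)$ and check $h(x_m)=0$, $h'(x_m)=0$, and $h''(x)=(1+x)^{-2}>0$, so that $x_m$ is the unique global minimizer of $h$ on $(-1,\infty)$ and hence $h\ge 0$ there, in particular for $x>0$.

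There is no genuine obstacle here; the only point to watch is the domain, i.e.\ that $f$, $g$, and the substituted value $t=(1+x)/(1+x_m)$ all make sense, which is guaranteed by $x>0$ and $x_m>0$ (indeed $x,x_m>-1$ would suffice). Once the lemma is established it will be applied with $x$ replaced by $\sum_{k=1}^{K-1}e^{w_{k0}+\bsx_i^T\bsw_k}$, thereby linearizing the awkward $-\log\bigl[1+\sum_{k=1}^{K-1}e^{w_{k0}+\bsx_i^T\bsw_k}\bigr]$ term in (\ref{eq.Qw}) and turning the maximization of $Q(\bsw;\bstheta^{[q]})$ into a separable problem amenable to coordinate-wise updates without matrix inversion.
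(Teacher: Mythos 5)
Your proof is correct. The paper does not actually prove Lemma \ref{LL} (it is taken as a standard fact from Lange's MM monograph), and your tangent-line/convexity argument --- equivalently the bound $\ln t \le t-1$ applied to $t=(1+x)/(1+x_m)$ --- is exactly the standard justification the authors rely on, including the observation that $x,x_m>-1$ already suffices.
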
\non
By applying this lemma and following \cite[page 211]{Lan13} we have
\begin{theorem}The function $I_1(\bsw) = - \sum\limits_{i=1}^n\log\Bigl[1+\sum\limits_{k=1}^{K-1}e^{w_{k0}+\bsx_i^T \bsw_k}\Bigl]$ is a majorizer of
$$G_1(\bsw|\bsw^{[s]}) = \sum\limits_{i=1}^n\Bigl[-\sum\limits_{k=1}^{K-1}\frac{\pi_k(\bsx_i;\bsw^{[s]})}{p+1}\sum\limits_{j=0}^p e^{(p+1)x_{ij}(w_{kj}-w_{kj}^{[s]})} - \log C_i^{m}+ 1- \frac{1}{C_i^{m}}\Bigl],$$
where $C_i^{m} = 1+\sum\limits_{k=1}^{K-1}e^{w_{k0}^{[s]}+\bsx_i^T \bsw_k^{[s]}}$ and $x_{i0}=1$.
\end{theorem}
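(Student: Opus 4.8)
The plan is to obtain the surrogate by composing two elementary minorizations, each tight at $\bsw=\bsw^{[s]}$, and then summing over $i$. First I would apply Lemma~\ref{LL} separately for each index $i$, with the choice $x=\sum_{k=1}^{K-1}e^{w_{k0}+\bsx_i^T\bsw_k}>0$ and $x_m=\sum_{k=1}^{K-1}e^{w_{k0}^{[s]}+\bsx_i^T\bsw_k^{[s]}}$, so that $1+x_m=C_i^m$. Lemma~\ref{LL} then gives, after collecting the constant $-(x-x_m)/(1+x_m)=-x/C_i^m+1-1/C_i^m$,
$$-\log\Bigl[1+\sum_{k=1}^{K-1}e^{w_{k0}+\bsx_i^T\bsw_k}\Bigr]\ \ge\ -\log C_i^m-\frac{1}{C_i^m}\sum_{k=1}^{K-1}e^{w_{k0}+\bsx_i^T\bsw_k}+1-\frac{1}{C_i^m},$$
with equality at $\bsw=\bsw^{[s]}$ (this is exactly the equality case $x=x_m$ of the lemma).

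Second, I would rewrite each remaining term using the softmax identity $\pi_k(\bsx_i;\bsw^{[s]})=e^{w_{k0}^{[s]}+\bsx_i^T\bsw_k^{[s]}}/C_i^m$, which yields $\frac1{C_i^m}e^{w_{k0}+\bsx_i^T\bsw_k}=\pi_k(\bsx_i;\bsw^{[s]})\exp\bigl(\sum_{j=0}^p x_{ij}(w_{kj}-w_{kj}^{[s]})\bigr)$ with the convention $x_{i0}=1$. Then I write the exponent as the average $\frac1{p+1}\sum_{j=0}^p (p+1)x_{ij}(w_{kj}-w_{kj}^{[s]})$ of $p+1$ reals and invoke Jensen's inequality for the convex function $\exp$, obtaining $\exp\bigl(\sum_{j=0}^p x_{ij}(w_{kj}-w_{kj}^{[s]})\bigr)\le \frac1{p+1}\sum_{j=0}^p e^{(p+1)x_{ij}(w_{kj}-w_{kj}^{[s]})}$, again an equality at $\bsw=\bsw^{[s]}$ (all summands equal $1$). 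Since $\pi_k(\bsx_i;\bsw^{[s]})\ge 0$, multiplying by $-\pi_k(\bsx_i;\bsw^{[s]})$ reverses the inequality; summing over $k=1,\dots,K-1$, substituting into the first display, and summing over $i=1,\dots,n$ produces precisely $I_1(\bsw)\ge G_1(\bsw\mid\bsw^{[s]})$. Because the equality cases of both steps occur simultaneously at $\bsw=\bsw^{[s]}$, we also get $I_1(\bsw^{[s]})=G_1(\bsw^{[s]}\mid\bsw^{[s]})$, which together with the inequality is the definition of $G_1$ minorizing $I_1$ (equivalently, $I_1$ majorizing $G_1$).

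The routine part is the constant bookkeeping in the first step and checking that $x_{i0}=1$ is used consistently so that the intercept $w_{k0}$ plays the role of the $j=0$ coordinate. The only genuinely delicate choice is splitting the linear exponent into $p+1$ equally weighted pieces before applying Jensen: this is what decouples the coordinates $w_{kj}$ in the surrogate (each $w_{kj}$ enters its own exponential $e^{(p+1)x_{ij}(w_{kj}-w_{kj}^{[s]})}$), which is exactly the separability exploited later to avoid matrix inversion in the M-step; one should verify that the $p+1$ weights $1/(p+1)$ sum to one and that the index set is indeed $j=0,\dots,p$. No hypothesis beyond $\pi_k(\bsx_i;\bsw^{[s]})\ge 0$ and the domain condition $x>0$ of Lemma~\ref{LL} is needed, the latter holding since $x$ is a sum of exponentials.
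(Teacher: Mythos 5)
Your proposal is correct and follows essentially the same route as the paper: Lemma~\ref{LL} applied with $x=\sum_{k}e^{w_{k0}+\bsx_i^T\bsw_k}$ and $x_m$ its value at $\bsw^{[s]}$, followed by the splitting of the exponent into $p+1$ equally weighted coordinates. Your use of Jensen's inequality for $\exp$ is just the arithmetic--geometric mean inequality of (\ref{AGinequality}) in different clothing, so the two arguments coincide step for step.
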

\begin{proof}Using Lemma \ref{LL}, $I_{1i}(w) = -\log\Bigl[1+\sum\limits_{k=1}^{K-1}e^{w_{k0}+\bsx_i^T \bsw_k}\Bigl]$ can be minorized by 
\begin{align*}
G_i(\bsw|\bsw^{[s]}) & = -\log\Bigl[1+\sum\limits_{k=1}^{K-1}e^{w_{k0}^{[s]}+\bsx_i^T \bsw_k^{[s]}}\Bigl] - \frac{\sum\limits_{k=1}^{K-1}(e^{w_{k0}+\bsx_i^T \bsw_k}-e^{w_{k0}^{[s]}+\bsx_i^T \bsw_k^{[s]}})}{1+\sum\limits_{k=1}^{K-1}e^{w_{k0}^{[s]}+\bsx_i^T \bsw_k^{[s]}}}\\
\ & = -\log C_i^{m}+ 1-\frac{1}{C_i^{m}}-\sum\limits_{k=1}^{K-1}\frac{e^{w_{k0}^{[s]}+\bsx_i^T \bsw_k^{[s]}}}{C_i^{m}}e^{(w_{k0}+\bsx_i^T\bsw_k)-(w_{k0}^{[s]}+\bsx_i^T\bsw_k^{[s]})}\cdot
\end{align*}Now, by using arithmetic-geometric mean inequality then
\begin{equation}\label{AGinequality} e^{(w_{k0}+\bsx_i^T \bsw_k)-(w_{k0}^{[s]}+\bsx_i^T \bsw_k^{[s]})}= \prod\limits_{j=0}^p e^{x_{ij}(w_{kj}-w_{kj}^{[s]})}
\le \frac{\sum\limits_{j=0}^p e^{(p+1)x_{ij}(w_{kj}-w_{kj}^{[s]})}}{p+1}\cdot
\end{equation}When $(w_{k0}, \bsw_k) = (w_{k0}^{[s]}, \bsw_k^{[s]})$ the equality holds.\\
Thus, $I_{1i}(w)$ can be minorized by
\begin{align*} 
G_{1i}(\bsw|\bsw^{[s]}) &= -\sum\limits_{k=1}^{K-1}\frac{e^{w_{k0}^{[s]}+\bsx_i^T\bsw_k^{[s]}}}{(p+1)C_i^{m}}\sum\limits_{j=0}^p e^{(p+1)x_{ij}(w_{kj}-w_{kj}^{[s]})} - \log C_i^{m}+ 1- \frac{1}{C_i^{m}}\\
\ &= -\sum\limits_{k=1}^{K-1}\frac{\pi_k(\bsx_i;\bsw^{[s]})}{p+1}\sum\limits_{j=0}^p e^{(p+1)x_{ij}(w_{kj}-w_{kj}^{[s]})} - \log C_i^{m}+ 1- \frac{1}{C_i^{m}}\cdot
\end{align*}
This leads us to the minorizing function $G_1(\bsw|\bsw^{[s]})$ for $I_1(w)$
 $$G_1(\bsw|\bsw^{[s]}) = \sum\limits_{i=1}^n\Bigl[-\sum\limits_{k=1}^{K-1}\frac{\pi_k(\bsx_i;\bsw^{[s]})}{p+1}\sum\limits_{j=0}^p e^{(p+1)x_{ij}(w_{kj}-w_{kj}^{[s]})} - \log C_i^{m}+ 1- \frac{1}{C_i^{m}}\Bigl]\cdot$$
\end{proof}\non
Therefore, the minorizing function $G^{[q]}(\bsw|\bsw^{[s]})$ for $Q(\bsw;\bstheta^{[q]})$ is given by
$$G^{[q]}(\bsw|\bsw^{[s]}) = \sum\limits_{i=1}^n\sum\limits_{k=1}^{K-1}\tau_{ik}^{[q]}(w_{k0}+\bsx_i^T \bsw_k) + G_1(\bsw|\bsw^{[s]}) - \sum\limits_{k=1}^{K-1}\gamma_k\sum\limits_{j=1}^p|w_{kj}|- \frac{\rho}{2}\sum\limits_{k=1}^{K-1}\sum\limits_{j=1}^p w_{kj}^2.$$ 
Now, let us separate $G^{[q]}(\bsw|\bsw^{[s]})$ into each parameter for all $k \in\{1,\hdots,K-1\},\ j \in\{1,\hdots,p\}$, we have:
\begin{align}
G^{[q]}(w_{k0}|\bsw^{[s]}) &=\sum\limits_{i=1}^n \tau_{ik}^{[q]}w_{k0}- \sum\limits_{i=1}^n \frac{\pi_k(\bsx_i;\bsw^{[s]})}{p+1}e^{(p+1)(w_{k0}- w_{k0}^{[s]})}+A_k(\bsw^{[s]}),\label{wk0}\\
G^{[q]}(w_{kj}|\bsw^{[s]}) &= \sum\limits_{i=1}^n \tau_{ik}^{[q]}x_{ij}w_{kj}- \sum\limits_{i=1}^n \frac{\pi_k(\bsx_i;\bsw^{[s]})}{p+1}e^{(p+1)x_{ij}(w_{kj}- w_{kj}^{[s]})}-\gamma_k|w_{kj}|-\frac{\rho}{2}w_{kj}^2+B_{kj}(\bsw^{[s]}),
\end{align}
where $A_k(\bsw^{[s]})$ and $B_{kj}(\bsw^{[s]})$ are only functions of $\bsw^{[s]}$.\\
The update of $w_{k0}^{[s]}$ is straightforward  by maximizing (\ref{wk0}) and given by
\begin{equation}
w_{k0}^{[s+1]} = w_{k0}^{[s]} + \frac{1}{p+1}\ln\left(\frac{\sum\limits_{i=1}^n\tau_{ik}^{[q]}}{\sum\limits_{i=1}^n\pi_k(\bsx_i;\bsw^{[s]})}\right).
\end{equation}The function $G^{[q]}(w_{kj}|\bsw^{[s]})$ is a concave function. Moreover, it is a univariate function w.r.t $w_{kj}$. 
We can therefore maximize it globally and w.r.t each coeffcient $w_{kj}$ separately and thus avoid matrix inversion. 
Indeed, let us denote by
$$F^{[q]}_{kjm}(w_{kj}) =  \sum\limits_{i=1}^n \tau_{ik}^{[q]}x_{ij}w_{kj}- \sum\limits_{i=1}^n \frac{\pi_k(\bsx_i;\bsw^{[s]})}{p+1}e^{(p+1)x_{ij}(w_{kj}- w_{kj}^{[s]})}-\frac{\rho}{2}w_{kj}^2+B_{kj}(\bsw^{[s]}),$$
hence, $G^{[q]}(w_{kj}|\bsw^{[s]})$ can be rewritten as
$$G^{[q]}(w_{kj}|\bsw^{[s]}) = 
\begin{cases}
F^{[q]}_{kjm}(w_{kj}) - \gamma_kw_{kj} &,\text{ if } w_{kj} > 0\\
F^{[q]}_{kjm}(0) &,\text{ if } w_{kj} = 0\\
F^{[q]}_{kjm}(w_{kj}) + \gamma_kw_{kj} &,\text{ if } w_{kj} < 0
\end{cases}.$$
We therefore have both $F^{[q]}_{kjm}(w_{kj}) - \gamma_kw_{kj}$ and $F^{[q]}_{kjm}(w_{kj}) + \gamma_kw_{kj}$ are smooth concave functions. 
Thus, one can use one-dimensional Newton-Raphson algorithm to find the global maximizers of these functions and compare with $F^{[q]}_{kjm}(0)$ in order to update $w^{[s]}_{kj}$ by $$w^{[s+1]}_{kj} = \arg\max_{w_{kj}}G^{[q]}(w_{kj}|\bsw^{[s]}).$$
The update of $w_{kj}$ can then be computed by a one-dimensional generalized Newton-Raphson (NR) algorithm, which updates, after starting from and initial value $w_{kj}^{[0]} = w_{kj}^{[s]}$, at each iteration $t$ of the NR, according to the following updating rule: 
$$w_{kj}^{[t+1]} = w_{kj}^{[t]} - \Bigl(\frac{\partial^2G^{[q]}(w_{kj}|\bsw^{[s]})}{\partial^2w_{kj}}\Bigl)^{-1}\Bigl|_{w_{kj}^{[t]}}\frac{\partial G^{[q]}(w_{kj}|\bsw^{[s]})}{\partial w_{kj}}\Bigl|_{w_{kj}^{[t]}} ,$$
where the first and the scalar gradient and hessian are respectively given by:
$$\frac{\partial G^{[q]}(w_{kj}|\bsw^{[s]})}{\partial w_{kj}} = \begin{cases}
U(w_{kj}) - \gamma_k &, G^{[q]}(w_{kj}|\bsw^{[s]})  = F^{[q]}_{kjm}(w_{kj}) - \gamma_kw_{kj}\\
U(w_{kj}) + \gamma_k &, G^{[q]}(w_{kj}|\bsw^{[s]})  = F^{[q]}_{kjm}(w_{kj}) + \gamma_kw_{kj}
\end{cases},$$
and
$$\frac{\partial^2 G^{[q]}(w_{kj}|\bsw^{[s]})}{\partial^2w_{kj}} = -(p+1) \sum\limits_{i=1}^n x_{ij}^2\pi_k(\bsx_i;\bsw^{[s]}) e^{(p+1)x_{ij}(w_{kj}- w_{kj}^{[s]})} -\rho,$$
with
$$U(w_{kj}) =\sum\limits_{i=1}^n \tau_{ik}^{[q]}x_{ij} - \sum\limits_{i=1}^n x_{ij}\pi_k(\bsx_i;\bsw^{[s]}) e^{(p+1)x_{ij}(w_{kj}- w_{kj}^{[s]})} - \rho w_{kj}.$$
Unluckily, while this method allows to compute separate univariate updates by globally maximizing concave functions, it has some drawbacks. First, we found the same behaviour of the MM algorithm for this non-smooth function setting as in \cite{Hun05}: once a coefficient is set to be zero, it may never reenter the model at a later stage of the algorithm. Second, the MM algorithm can stuck on non-optimal points of the objective function.
 \cite{Schi10} made an interesting study on the convergence of the MM algorithms for nonsmoothly penalized objective functions, in which they proof that with some conditions on the minorizing function (see Theorem 2.1 of \cite{Schi10}), then the MM algorithm will converge to the optimal value. One of these conditions requires the minorizing function must be strickly positive, which is not guaranteed in our method, since we use the arithmetic-geometric mean inequality in (\ref{AGinequality}) to construct our surrogate function. 
 Hence, we just ensure that the value of $Q(\bsw;\bstheta^{[q]})$ will not decrease in our algorithm. In the next section, we propose updating $(w_{k0}, \bsw_k)$ by using coordinate ascent algorithm. This approach overcomes this weakness of the MM algorithm.
\subsubsection{Coordinate ascent algorithm for updating the gating network}
\label{sec::CAalgorithm}
We now consider another approach for updating $(w_{k0}, \bsw_k)$ by using coordinate ascent algorithm. Indeed, based on \cite{Tse88, Tse01}, with  regularity  conditions, then the coordinate ascent algorithm is successful in updating $\bsw$. 
Thus, the $\bsw$ parameters are updated in a cyclic way, where a coefficient $w_{kj}$ ($j \not= 0$) is updated at each time, while fixing the other parameters to their previous values. Hence, at each iteration one just needs to update only one parameter. With this setting, the update of $w_{kj}$ is performed by maximizing the component $(k,j)$ of (\ref{eq.Qw}) given by
\begin{equation}
Q(w_{kj};\bstheta^{[q]}) = F(w_{kj};\bstheta^{[q]}) -\gamma_k|w_{kj}|,
\end{equation}
where
\begin{equation}
F(w_{kj};\bstheta^{[q]}) = \sum_{i=1}^n\tau_{ik}^{[q]}(w_{k0}+\bsw^T_k \bsx_i) - \sum_{i=1}^n\log\Bigl[1+\sum_{l=1}^{K-1}e^{w_{l0}+\bsw^T_l \bsx_i}\Bigl] - \frac{\rho}{2}w_{kj}^2.
\end{equation}
Hence, $Q(w_{kj};\bstheta^{[q]})$ can be rewritten as
$$Q(w_{kj};\bstheta^{[q]}) = 
\begin{cases}
F(w_{kj};\bstheta^{[q]}) -\gamma_k w_{kj} &,\text{ if }w_{kj} > 0\\
F(0;\bstheta^{[q]}) &,\text{ if }w_{kj} = 0\\
F(w_{kj};\bstheta^{[q]}) +\gamma_k w_{kj} &,\text{ if }w_{kj} < 0
\end{cases}.$$
Again, both $F(w_{kj};\bstheta^{[q]}) -\gamma_k w_{kj}$ and $F(w_{kj};\bstheta^{[q]}) +\gamma_k w_{kj}$ are smooth concave functions. Thus, one can use one-dimensional generalized Newton-Raphson algorithm with initial value $w_{kj}^{[0]} = w_{kj}^{[q]}$  to find the maximizers of these functions and compare with $F(0;\bstheta^{[q]}) $ in order to update $w^{[s]}_{kj}$ by $$w^{[s+1]}_{kj} = \arg\max_{w_{kj}}Q(w_{kj};\bstheta^{[q]}),$$
where $s$ denotes the $s$th loop of the coordinate ascent algorithm. The update of $w_{kj}$ is therefore computed iteratively after starting from and initial value $w_{kj}^{[0]} = w_{kj}^{[s]}$ following the update equation
{\begin{equation}
w_{kj}^{[t+1]} = w_{kj}^{[t]} - \Bigl(\frac{\partial^2 Q(w_{kj};\bstheta^{[q]})}{\partial^2w_{kj}}\Bigl)^{-1}\Bigl|_{w_{kj}^{[t]}}\frac{\partial Q(w_{kj};\bstheta^{[q]})}{\partial w_{kj}}\Bigl|_{w_{kj}^{[t]}} ,
\end{equation}}where $t$ in the inner NR iteration number, and the one-dimensional gradient and hessian functions are respectively given by
\begin{equation}\frac{\partial Q(w_{kj};\bstheta^{[q]})}{\partial w_{kj}} = \begin{cases}
U(w_{kj}) - \gamma_k &,\text{ if }Q(w_{kj};\bstheta^{[q]})  = F(w_{kj};\bstheta^{[q]}) -\gamma_k w_{kj}\\
U(w_{kj}) + \gamma_k &,\text{ if }Q(w_{kj};\bstheta^{[q]})  = F(w_{kj};\bstheta^{[q]}) +\gamma_k w_{kj}
\end{cases},\end{equation}
and
\begin{equation*}
\!\!\!\!\frac{\partial^2 Q(w_{kj};\bstheta^{[q]})}{\partial^2 w_{kj}} = -\sum_{i=1}^n \frac{x_{ij}^2e^{w_k0+x_i^T\bsw_k}(C_i(w_{kj})-e^{w_{k0}+x_i^T\bsw_k})}{C_i^2(w_{kj})}-\rho.
\end{equation*}with 
$$U(w_{kj}) = \sum_{i=1}^n x_{ij}\tau_{ik}^{[q]} - \sum_{i=1}^n \frac{x_{ij}e^{w_{k0}+\bsx_i^T\bsw_k}}{C_i(w_{kj})} - \rho w_{kj},$$
and
$$C_i(w_{kj}) = 1+ \sum_{l\not=k}e^{w_{l0}+x_i^T\bsw_l}+e^{w_{k0}+x_i^T\bsw_k},$$
is a univariate function of $w_{kj}$ when fixing other parameters. 
For other parameter we set $w_{lh}^{[s+1]} = w_{lh}^{[s]}$.\\
Similarly, for the update of $w_{k0}$, a univariate Newton-Raphson algorithm with initial value $w_{k0}^{[0]} = w_{k0}^{[q]}$ can be used to provide the update $w^{[s]}_{k0}$ given by 
$$w^{[s+1]}_{k0} = \arg\max_{w_{k0}}Q(w_{k0};\bstheta^{[q]}),$$
where $Q(w_{k0};\bstheta^{[q]})$ is a univariate concave function given by 
{\begin{equation}
Q(w_{k0};\bstheta^{[q]}) = \sum_{i=1}^n\tau_{ik}^{[q]}(w_{k0}+\bsx_i^T \bsw_k) - \sum_{i=1}^n\log\Bigl[1+\sum_{l=1}^{K-1}e^{w_{l0}+\bsx_i^T \bsw_l}\Bigl],
\end{equation}}with
{\begin{equation}\frac{\partial Q(w_{k0};\bstheta^{[q]})}{\partial w_{k0}} = \sum_{i=1}^n\tau_{ik}^{[q]} - \sum_{i=1}^n\frac{e^{w_{k0}+\bsx_i^T\bsw_k}}{C_i(w_{k0})}
\end{equation}}
and
{\begin{equation}\frac{\partial^2 Q(w_{k0};\bstheta^{[q]})}{\partial^2 w_{k0}} = -\sum_{i=1}^n \frac{e^{w_{k0}+x_i^T\bsw_k}(C_i(w_{k0})-e^{w_{k0}+x_i^T\bsw_k})}{C_i^2(w_{k0})}.
\end{equation}}The other parameters are fixed while updating $w_{k0}$. 
By using the coordinate ascent algorithm, we have univariate updates, and make sure that the parameters $w_{kj}$ may change during the algorithm even after they shrink to zero at an earlier stage of the algorithm. 
\subsubsection{Updating the experts network}
Now once we have these two methods to update the gating network parameters, we move on updating the experts network parameters $(\{\bsbeta,\sigma^2\})$. To do that, we first perform the update for $(\beta_{k0}, {\bf \beta}_k)$, while fixing $\sigma_k$. This corresponds to solving $K$ separated weighted Lasso problems. 
Hence, we choose to use a coordinate ascent algorithm to deal with this. 
Actually, in this situation the coordinate ascent algorithm can be seen as a special case of the MM algorithm, and hence, this updating step is common to both of the proposed algorithms. 
More specifically, the update of $\beta_{kj}$ is performed by maximizing 
\begin{equation}
Q(\bsbeta, \sigma;\bstheta^{[q]}) = \sum_{i=1}^n\sum_{k=1}^K\tau_{ik}^{[q]}\log\mathcal{N}(y_i; \beta_{k0} +\bsbeta^T_k \bsx_i,\sigma_k^2) - \sum_{k=1}^K\lambda_k\|\bsbeta_k\|_1;
\end{equation}using a coordinate ascent algorithm, with initial values $(\beta_{k0}^{[0]}, \bsbeta_k^{[0]}) = (\beta_{k0}^{[q]}, \bsbeta_k^{[q]})$. We obtain closed-form coordinate updates that can be computed for each component following the results in \cite[sec. 5.4]{TH15c}, and are given by
\begin{equation}\label{CDb}
\beta_{kj}^{[s+1]} = \frac{{\bf\mathcal{S}}_{\lambda_k\sigma_k^{(s)2}}\bigl(\sum_{i=1}^n\tau_{ik}^{[q]}r_{ikj}^{[s]}x_{ij}\bigl)}{\sum_{i=1}^n\tau_{ik}^{[q]}x_{ij}^2},
\end{equation}with $r_{ikj}^{[s]} = y_i-\beta_{k0}^{[s]}- \bsbeta^{[s]T}_k \bsx_i + \beta_{kj}^{[s]}x_{ij}$ and ${\bf\mathcal{S}}_{\lambda_k\sigma_k^{(s)2}}(.)$ is a  soft-thresholding operator defined by {$[{\bf\mathcal{S}}_\gamma(u)]_j = \text{sign}(u_j)(|u_j|-\gamma)_+$} and {$(x)_+$}  a shorthand for $\max\{x, 0\}$. For  $h\not=j$, we set $\beta_{kh}^{[s+1]} = \beta_{kh}^{[s]}$. At each iteration $m$, $\beta_{k0}$ is updated by
\begin{equation}
\label{Bk0}\beta_{k0}^{[s+1]} = \frac{\sum_{i=1}^n\tau_{ik}^{[q]}(y_i - \bsbeta^{[s+1]T}_k\bsx_i)}{\sum_{i=1}^n\tau_{ik}^{[q]}}\cdot
\end{equation}
In the next step, we take {$(w_{k0}^{[q+2]}, \bsw_k^{[q+2]}) = (w_{k0}^{[q+1]}, \bsw_k^{[q+1]})$}, {$(\beta_{k0}^{[q+2]}, \bsbeta_k^{[q+2]}) = (\beta_{k0}^{[q+1]}, \bsbeta_k^{[q+1]})$}, rerun the E-step, and update $\sigma_k^{2}$ according to the standard update of a weighted Gaussian regression
\begin{equation}\label{SHe}\displaystyle\sigma_k^{2[q+2]} = \frac{\sum_{i=1}^n\tau_{ik}^{[q+1]}(y_i - \beta_{k0}^{[q+2]} - {\bsbeta^{[q+2]}_k}^T \bsx_i)^2}{\sum_{i=1}^n\tau_{ik}^{[q+1]}}\cdot
\end{equation}
Each of the two proposed algorithms is iterated until the change in $PL(\bstheta)$ is small enough.
These algorithms increase the penalised log-likelihood function (\ref{eq:PenLoglik MoE}) as shown in Appendix.
 Also we can directly get zero coefficients without any thresholding unlike in \cite{Kha10, Hun05}.

The R codes of the developed algorithms and the documentation are publicly available on this link\footnote{{https://chamroukhi.users.lmno.cnrs.fr/software/RMoE/RCode-RMoE.zip}}. An R package will be submitted and available soon on the CRAN.
\subsection{Algorithm tuning and model selection}

In practice, appropriate values of the tuning parameters $(\lambda, \gamma,\rho)$ should be chosen. To select the tuning parameters, we propose a modified BIC with a grid search scheme, 
as an extension of the criterion used in \cite{Sta10} for regularized mixture of regressions. First, assume that $K_0\in\{K_1,\hdots,K_M\}$ whereupon $K_0$ is the true number of expert components. For each value of $K$, we choose a grid of the tuning parameters. 
Consider grids of values $\{\lambda_1,\hdots,\lambda_{M_1}\}$, $\{\gamma_1,\hdots,\gamma_{M_2}\}$ in the size of $\sqrt{n}$ and a small enough value of $\rho\approx O(\log n)$ for the ridge turning parameter.  $\rho = 0.1\log n$ can be used in practice. For a given triplet $(K, \lambda_i,\gamma_j)$, we select the maximal penalized log-likelihood estimators $\widehat{\bstheta}_{K, \lambda, \gamma}$ using each of our hybrid EM algorithms presented above. Then, the following modified BIC criterion,  
\begin{equation}\label{BIC}
\text{BIC}(K, \lambda,\gamma) = L(\widehat{\bstheta}_{K, \lambda, \gamma}) - DF(\lambda, \gamma)\frac{\log n}{2},
\end{equation}where $DF(\lambda, \gamma)$ is the estimated number of non-zero coefficients in the model,  is computed. Finally, the model with parameters $(K, \lambda, \gamma) = (\tilde{K}, \tilde{\lambda}, \tilde{\gamma})$ which maximizes the modified BIC value, is selected. 
While the problem of choosing optimal values of the tuning parameters for penalized MoE models is still an open research, the modified BIC performs reasonably well in our experiments.   
 
%
%
%

\section{Experimental study}
\label{sec: Experiments}
We study the performance of our methods on both simulated data and real data. We compare the results of our two algorithms (Lasso+$\ell_2$ (MM) and Lasso+$\ell_2$ with coordinate ascent (CA)), with the following four methods: {\it i)} the standard non-penalized MoE (MoE), {\it ii)} the MoE with $\ell_2$ regularization (MoE+$\ell_2$), {\it iii)} the mixture of linear regressions with Lasso penalty (MIXLASSO), and the {\it iv)} MoE with BIC penalty for feature selection. 
We consider several evaluation criteria to assess the performance of the models, including sparsity, parameters estimation and clustering criteria.

\subsection{Evaluation criteria}
We compare the results of all the models for three different criteria: sensitivity/specificity, parameters estimation, and clustering performance for simulation data. The sensitivity/specificity is defined by
\begin{itemize}
\item {\it Sensitivity:} proportion of correctly estimated zero coefficients;
\item {\it Specificity:} proportion of correctly estimated nonzero coefficients.
\end{itemize}
In this way, we compute the ratio of the estimated zero/nonzero coefficients to the true number of zero/nonzero coefficients of the true parameter for each component. 
In our simulation, the proportion of correctly estimated zero coefficients and nonzero coefficients have been calculated for each data set  for the experts parameters and the gating parameters, and we present the average proportion of these criteria computed over 100 different data sets. 
Also, to deal with the label switching before calculating these criteria, we permuted the estimated coefficients based on an ordered between the expert parameters. If the label switching happens, one can permute the expert parameters and the gating parameters then replace the second one $\bsw_k^{per}$ with $\bsw_k^{per} - \bsw_K^{per}$. By doing so, we ensure that the log-likelihood will not change, that means $L(\hat{\bstheta}) = L(\hat{\bstheta}^{per})$ and these parameters satisfy the initialized condition $\bsw_K^{per} = \boldsymbol{0}$. However, the penalized log-likelihood value can be different from the one before permutation. So this may result in misleading values of the sparsity criterion of the model when we permute the parameters. 
However, for $K = 2$ both log-likelihood function and the penalized log-likelihood function will not change since we have $\bsw_1^{per} = -\bsw_1$.\\
For the second criterion of parameter estimation, we compute the mean and standard deviation of both penalized parameters and non penalized parameters in comparison with the true value $\bstheta$. We also consider the mean squared error (MSE) between each component of the true parameter vector and the estimated one, which is given by $\|\theta_j - \hat\theta_j\|^2$. \\
For the clustering criterion, once the parameters are estimated and permuted, the provided conditional component probabilities $\hat{\tau}_{ik}$ defined in (\ref{eq:RMoE post prob}) represent a soft partition of the data. A hard partition of the data is given by applying the Bayes's allocation rule
$$\hat{z}_i = \arg\max_{k =1}^K \tau_{ik}(\widehat{\bstheta}),$$ 
where $\hat{z}_i$ represents the estimated cluster label for the $i$th observation. 
Given the estimated and true cluster labels, we therefore compute the  correct classification rate and the Adjusted Rand Index (ARI). \\
Also, we note that for the standard MoE with BIC penalty, we consider a pool of $5\times 4 \times 5 = 100$ submodels. Our EM algorithm with coordinate ascent has been used with zero penalty coefficients and without updating the given zero parameters in the experts and the gating network to obtain the (local) MLE of each submodel. 
After that, the BIC criterion in (\ref{BIC}) was used to choose the best submodel among 100 model candidates.

\subsection{Simulation study}
For each data set, we consider $n=300$ predictors $\bsx$  generated from a multivariate Gaussian distribution with zero mean and correlation defined by $\text{corr}(x_{ij}, x_{ij\prime}) = 0.5^{|j-j\prime|}$. The response $Y|\bsx$ is generated from a normal MoE model of $K = 2$ expert components as defined by (\ref{eq.Z generation in MoE}) and (\ref{eq.NMoE regression generative model}), with the following regression coefficients:
\begin{align*}
(\beta_{10},\bsbeta_1)^T &= (0, 0, 1.5, 0, 0, 0, 1)^T;\\
(\beta_{20},\bsbeta_2)^T &= (0, 1, -1.5, 0, 0, 2, 0)^T;\\
(w_{10}, \bsw_1)^T &=  (1, 2, 0, 0, -1, 0, 0)^T;
\end{align*}
and $\sigma_1 = \sigma_2 = \sigma = 1$. 100 data sets were generated for this simulation. The results will be presented in the following sections.
\subsubsection{Sensitivity/specificity criteria}
Table \ref{S/S} presents the sensitivity ($S_1$) and specificity ($S_2$) values for the experts $1$ and $2$ and the gates
for each of the considered models. As it can be seen in the obtained results that the $\ell_2$ and MoE models cannot be considered as model selection methods since their sensitivity almost surely equals zero. However, it is obvious that the Lasso+$\ell_2$, with both the MM and the CA algorithms, performs quite well for experts $1$ and $2$.
The feature selection becomes more difficult for the gate $\pi_k(\bsx;\bsw)$ since there is correlation between features. 
While Lasso+$\ell_2$ using MM (Lasso+$\ell_2$ (MM)) may get trouble in detecting non-zero coefficients in the gating network, the Lasso+$\ell_2$ with coordinate ascent (Lasso+$\ell_2$ (CA)) performs quite well. 
The MIXLASSO, can detect the zero coefficients in the experts but it will be shown in the later clustering results that this model has a poor result when clustering the data. Note that for the MIXLASSO we do not have gates, so variable ``N/A" is mentioned in the results. 
Finally, while the BIC provides the best results in general, it is hard to apply BIC in reality since the number of submodels may be huge. 
\begin{table}[!h]
\centering
\begin{tabular}{|c|c|c|c|c|c|c|}
\hline
Method &\multicolumn{2}{c|}{Expert 1} &\multicolumn{2}{c|}{Expert 2}& \multicolumn{2}{c|}{Gate}\\
\cline{2-7}
  & $S_1$ & $S_2$ & $S_1$ & $S_2$ & $S_1$ & $S_2$\\
\hline
MoE & 0.000 & 1.000 & 0.000 & 1.000 & 0.000 & 1.000\\
MoE+$\ell_2$ & 0.000 & 1.000 & 0.000 & 1.000 & 0.000 & 1.000\\
MoE-BIC & 0.920 & 1.000 & 0.930 & 1.000 & 0.850 & 1.000\\
MIXLASSO & 0.775 & 1.000 & 0.693 & 1.000 & N/A & N/A\\
Lasso+$\ell_2$ (MM) & 0.720 & 1.000 & 0.777 & 1.000 & 0.815 & 0.615\\
Lasso+$\ell_2$ (CA) & 0.700 & 1.000 & 0.803 & 1.000 & 0.853 & 0.945\\
\hline
\end{tabular}
\caption{Sensitivity ($S_1$) and specificity ($S_2$) results. \label{S/S}}
\end{table}
\subsubsection{Parameter estimation}
The boxplots of all estimated parameters are shown in Figures \ref{Beta1}, \ref{Beta2} and \ref{Gate}. 
It turns out that the MoE and MoE+$\ell_2$ could not be considered as model selection methods. Besides that, by adding the $\ell_2$ penalty functions, we can reduce the variance of the parameters in the gate. The BIC, Lasso+$\ell_2$ (MM) and Lasso+$\ell_2$ (CA) provide sparse results for the model, not only in the experts, but also in the gates. However, the Lasso+$\ell_2$ (MM) in this situation forces the nonzero parameter $w_{14}$ toward zero, and this effects the clustering result. The MIXLASSO can also detect zero coefficients in the experts, but since this model does not have a mixture proportions that depend on the inputs, it is least competitive than others. 
\begin{figure*}[!h]
\centering
\begin{tabular}{cc}
\includegraphics[width = 6 cm]{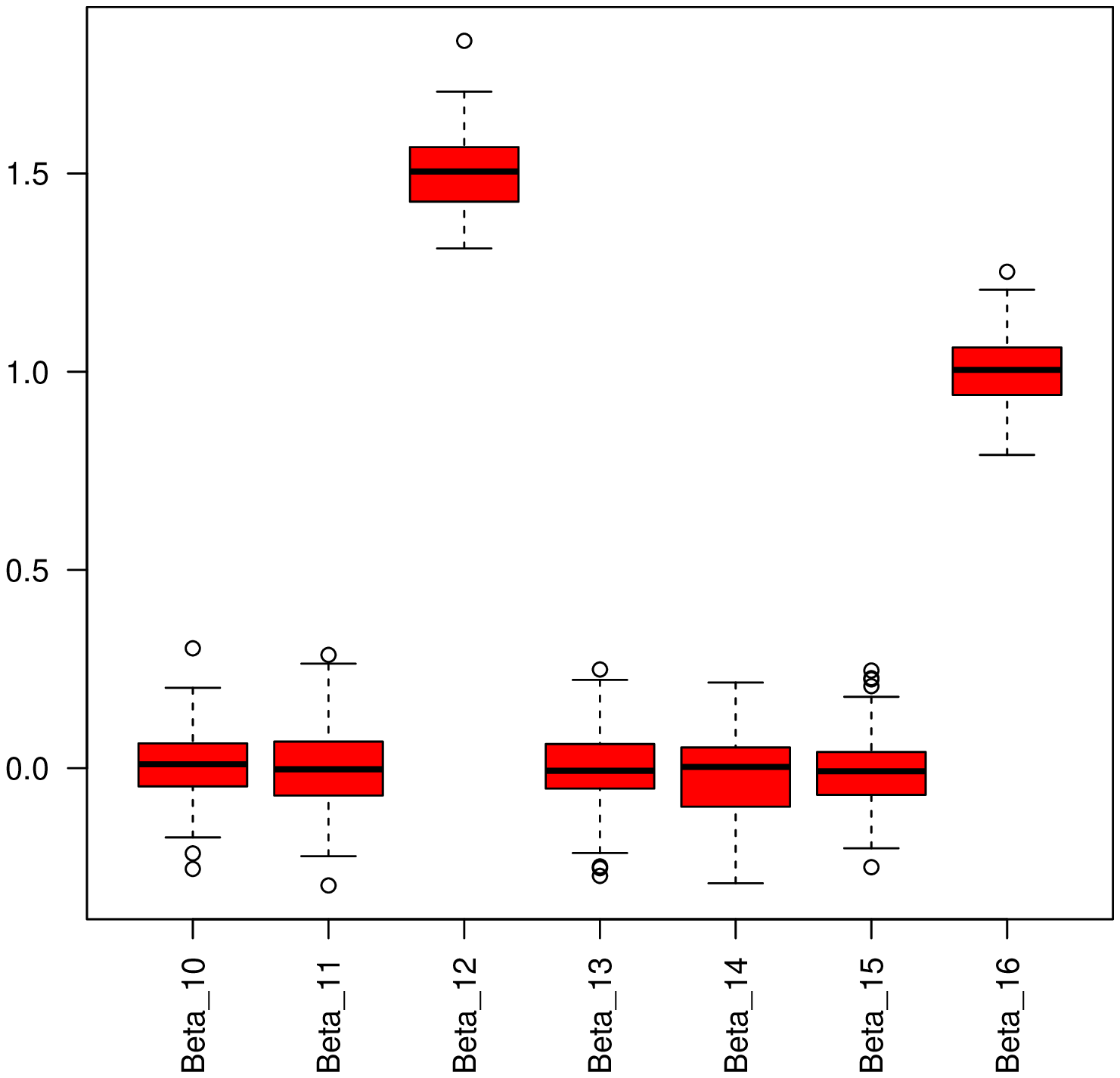}&
\includegraphics[width = 6 cm]{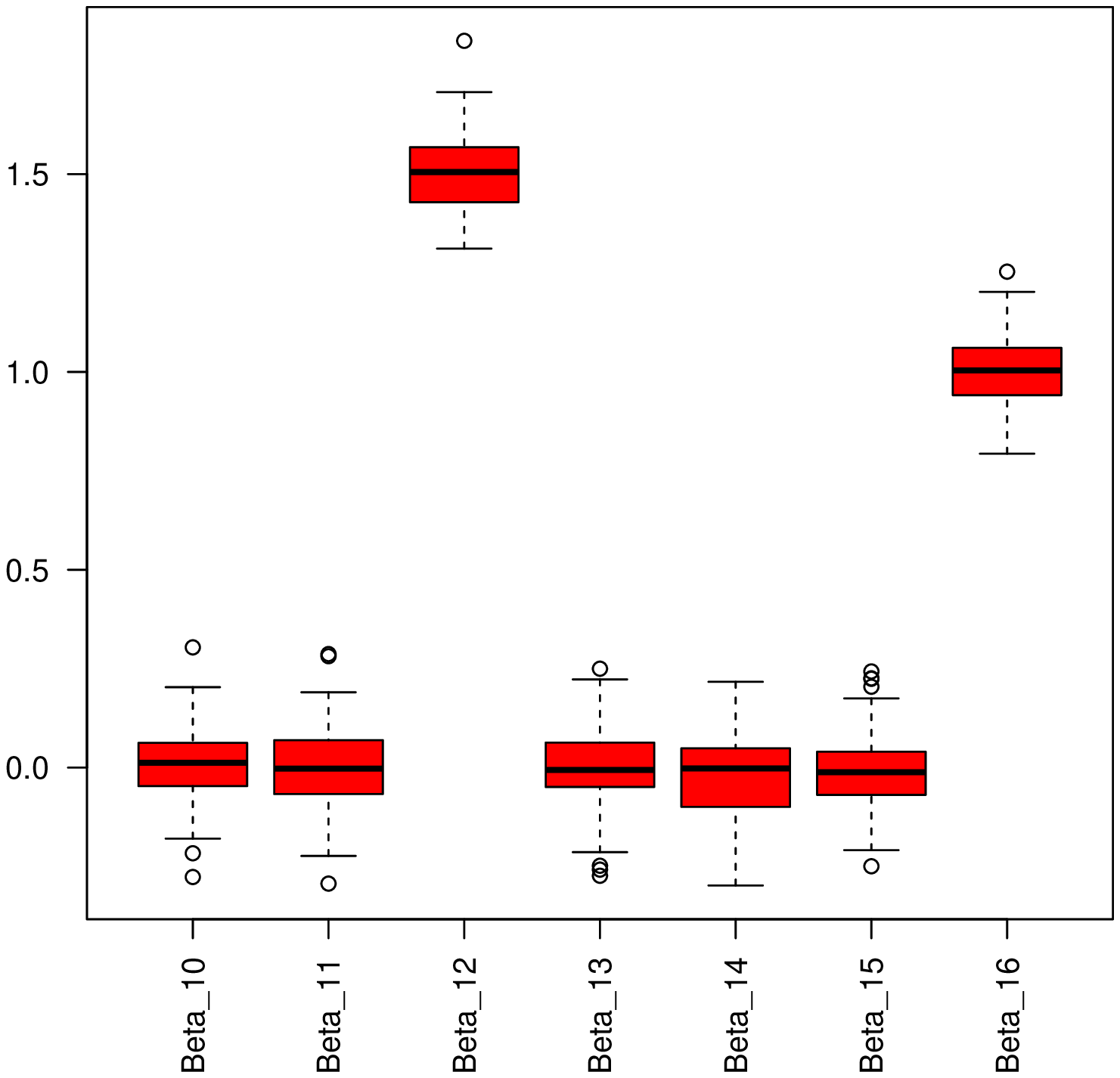}\\
MoE & MoE-$\ell_2$\\  
\includegraphics[width = 6 cm]{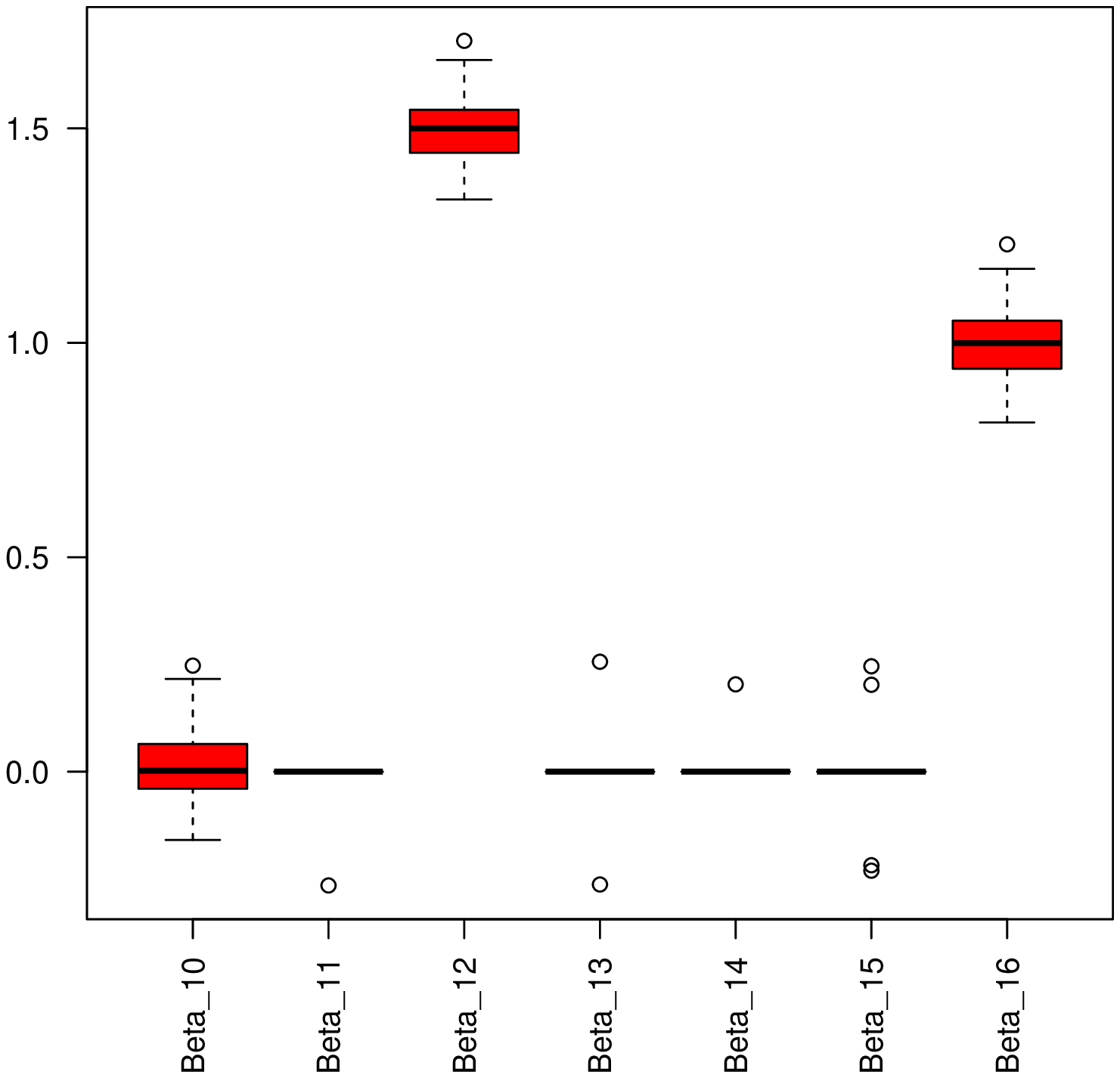}&
\includegraphics[width = 6 cm]{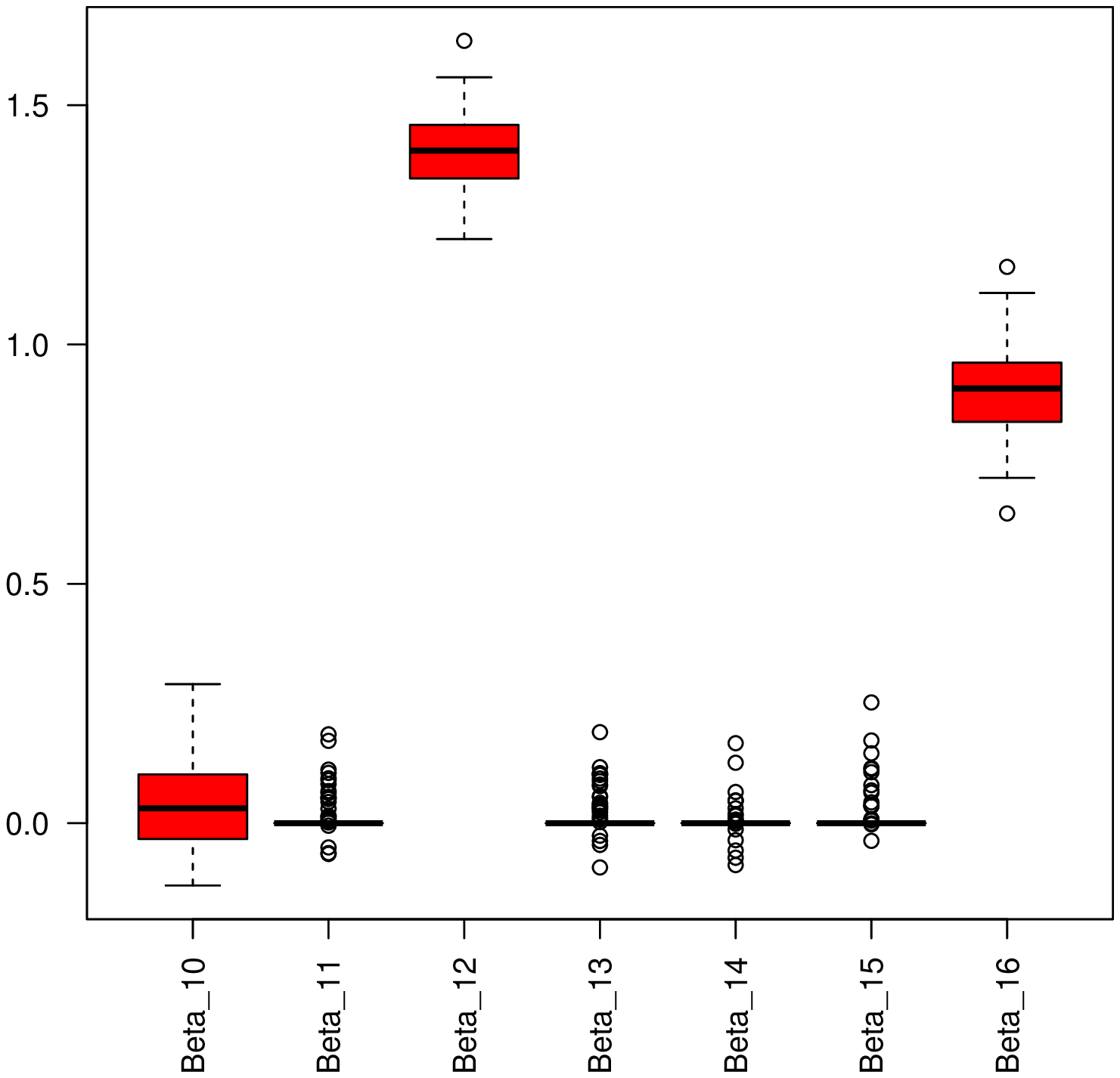}\\
MoE-BIC & MIXLASSO  \\  
\includegraphics[width = 6 cm]{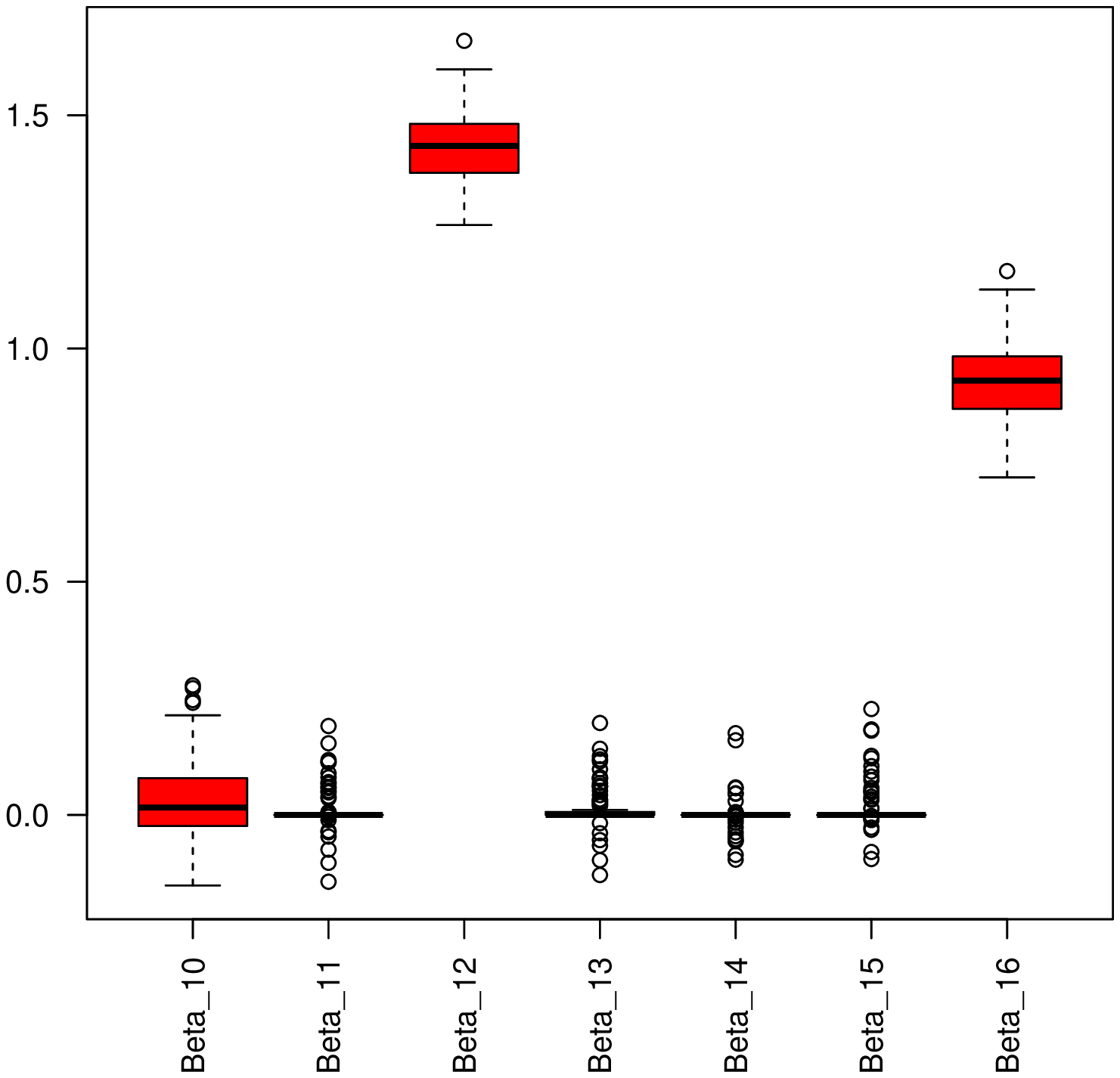}&
\includegraphics[width = 6 cm]{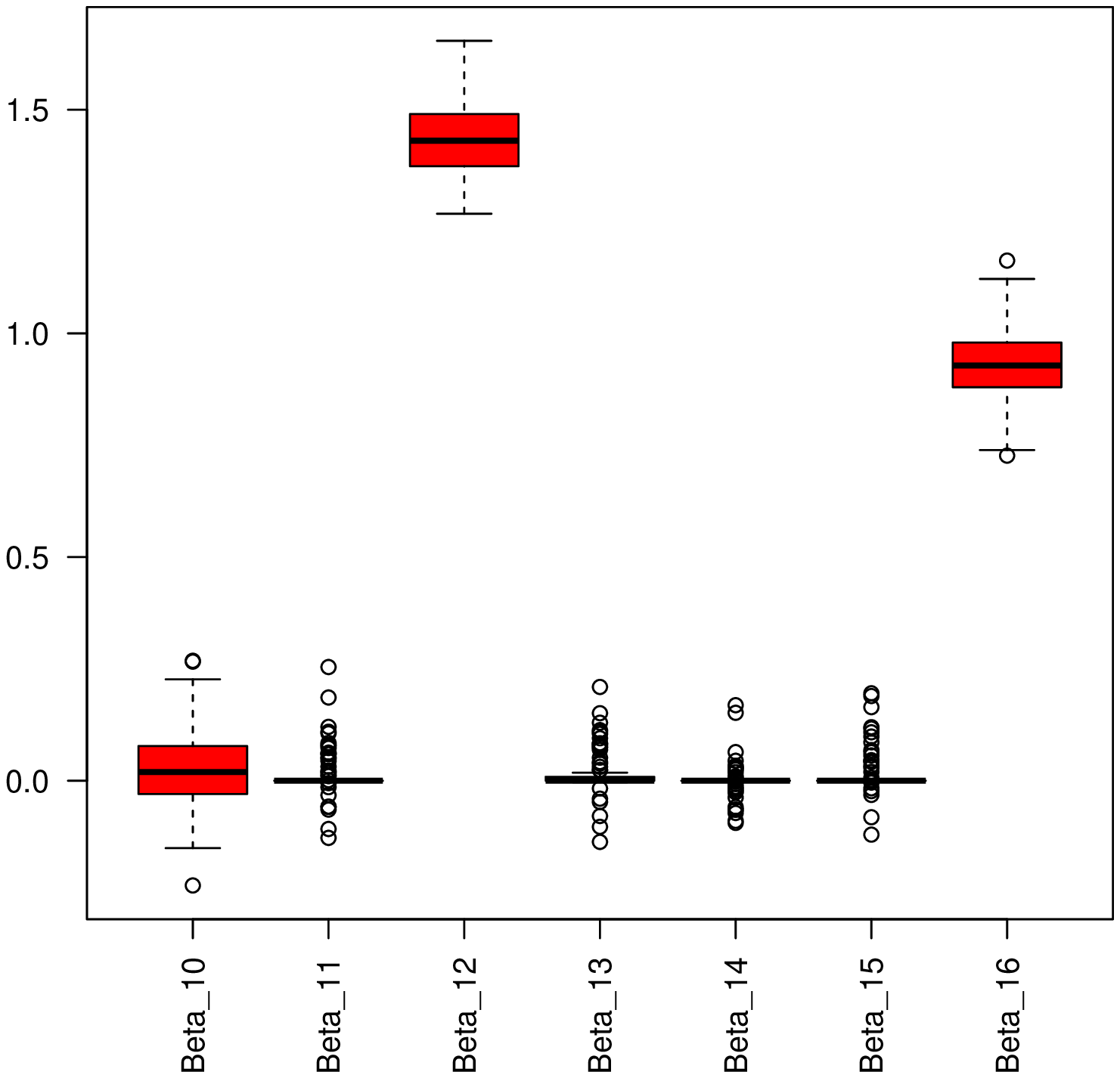}\\
MoE-Lasso + $\ell_2$ (MM) & MoE-Lasso + $\ell_2$ (CA) 
\end{tabular}
\caption{Boxplots of the expert 1's parameter $(\beta_{10},\bsbeta_1)^T = (0,0,1.5,0,0,0,1)^T$.\label{Beta1}}
\end{figure*}

\begin{figure*}[!h]
\centering
\begin{tabular}{cc}
\includegraphics[width = 6 cm]{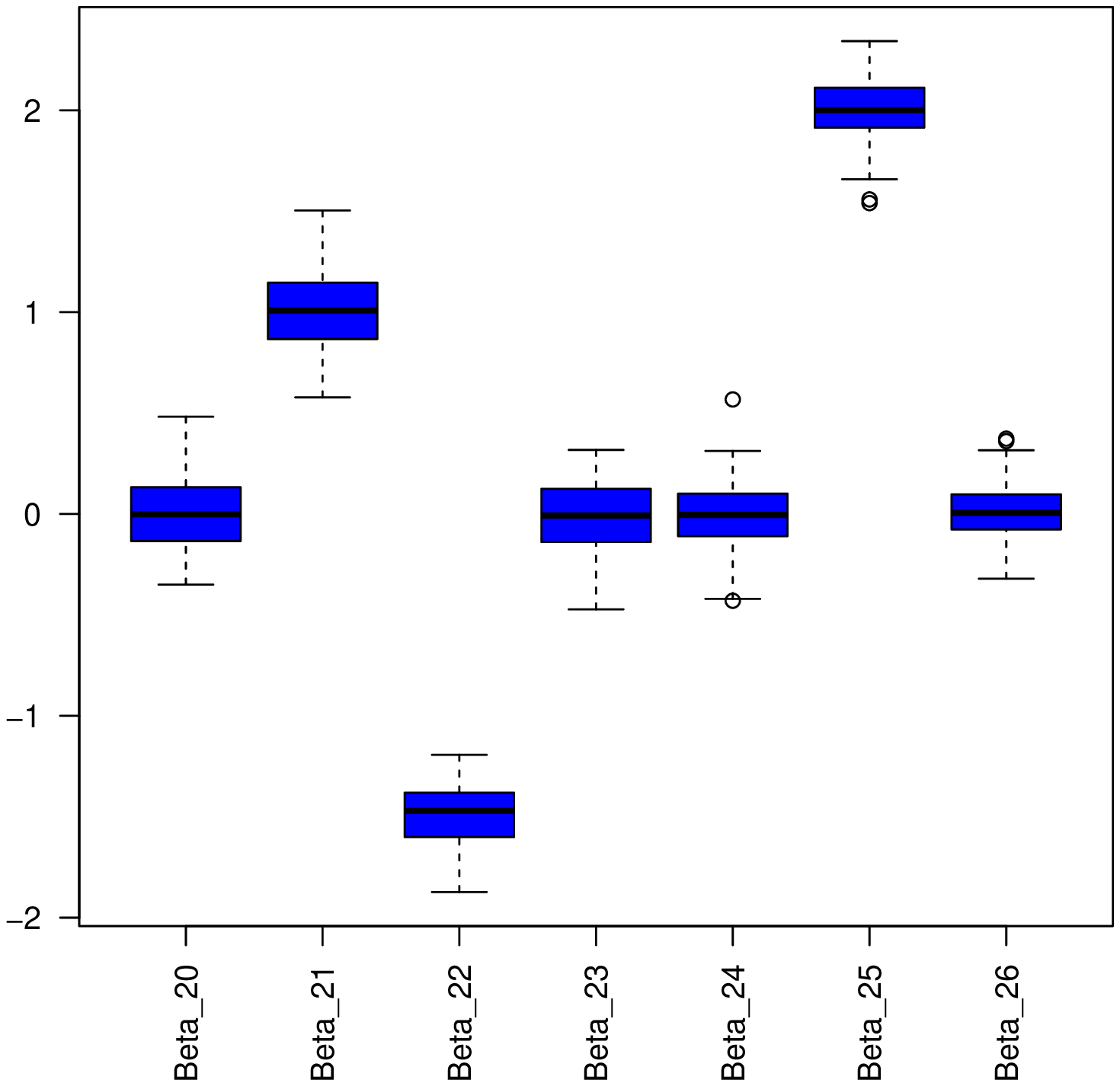}&
\includegraphics[width = 6 cm]{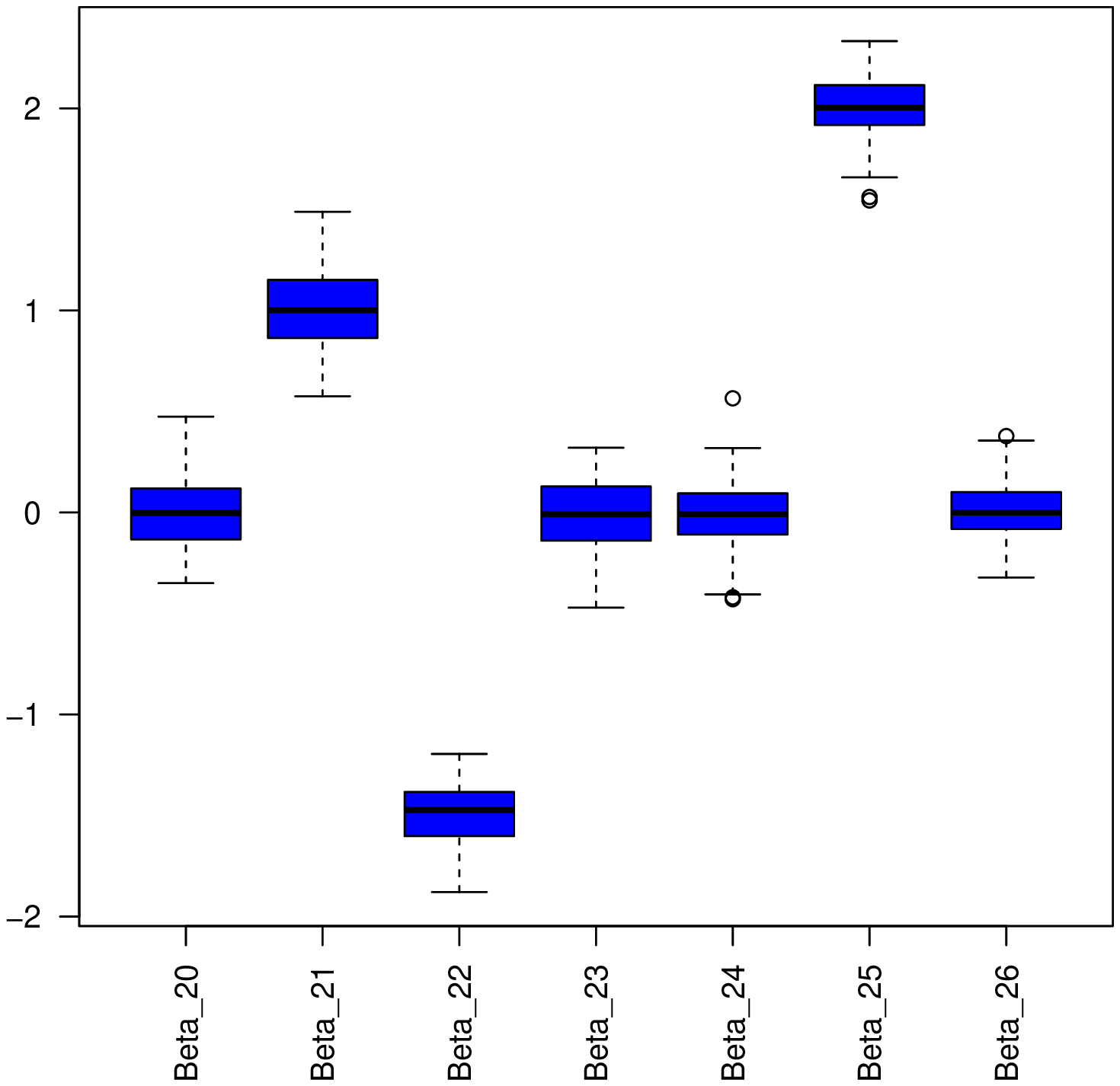} \\
MoE & MoE-$\ell_2$\\  
\includegraphics[width = 6 cm]{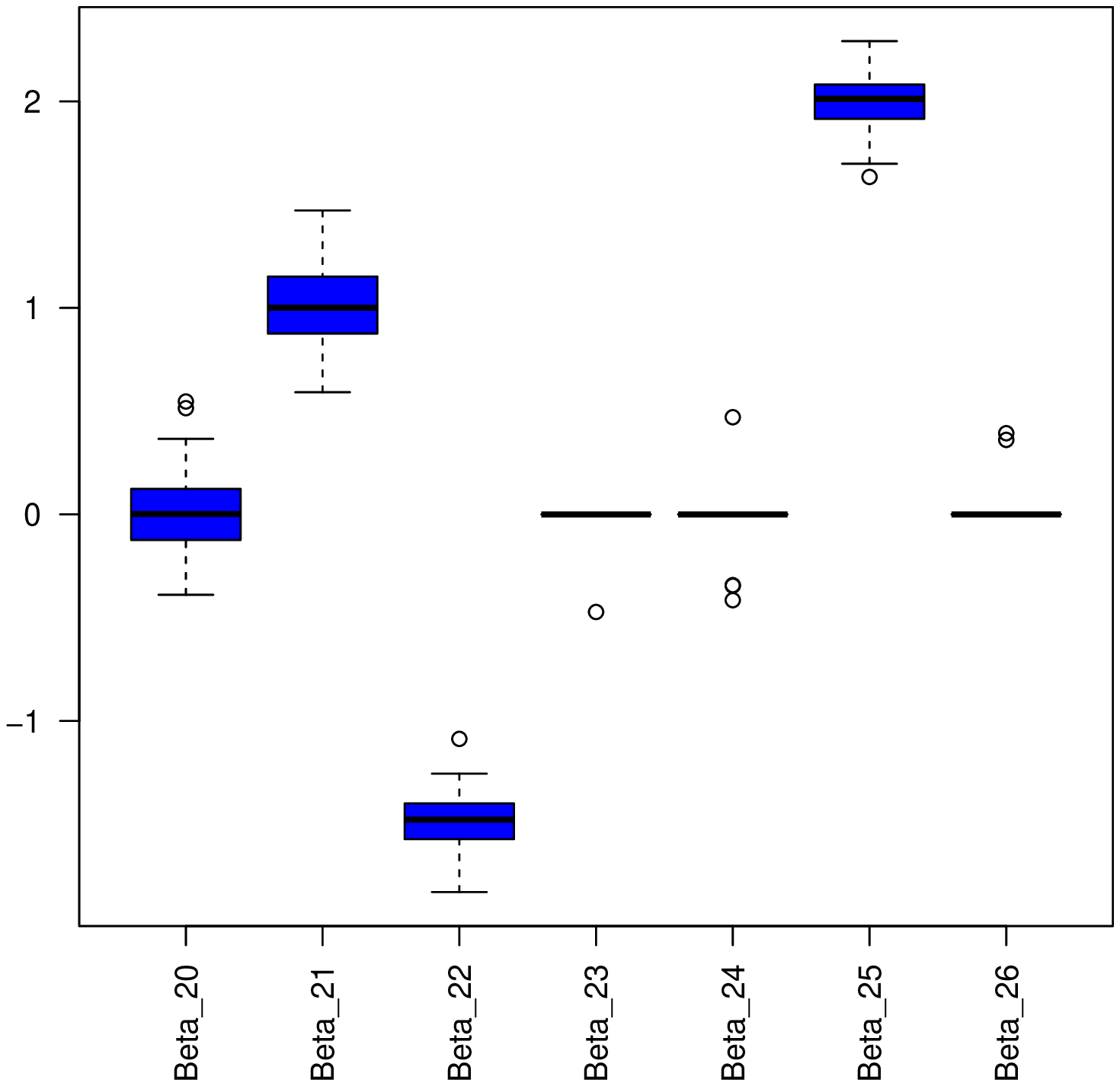} &
\includegraphics[width = 6 cm]{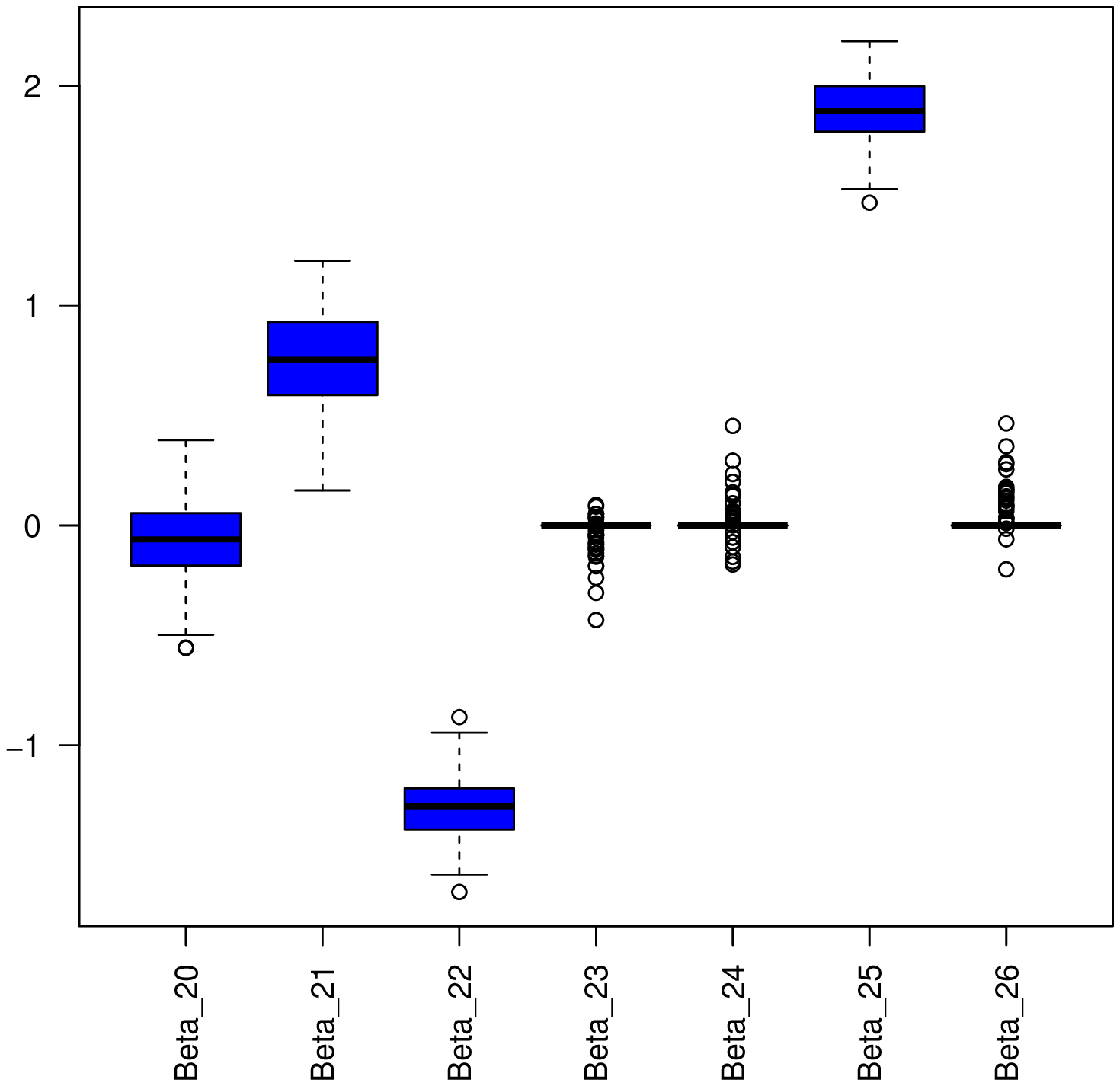}\\
MoE-BIC & MIXLASSO  \\  
\includegraphics[width = 6 cm]{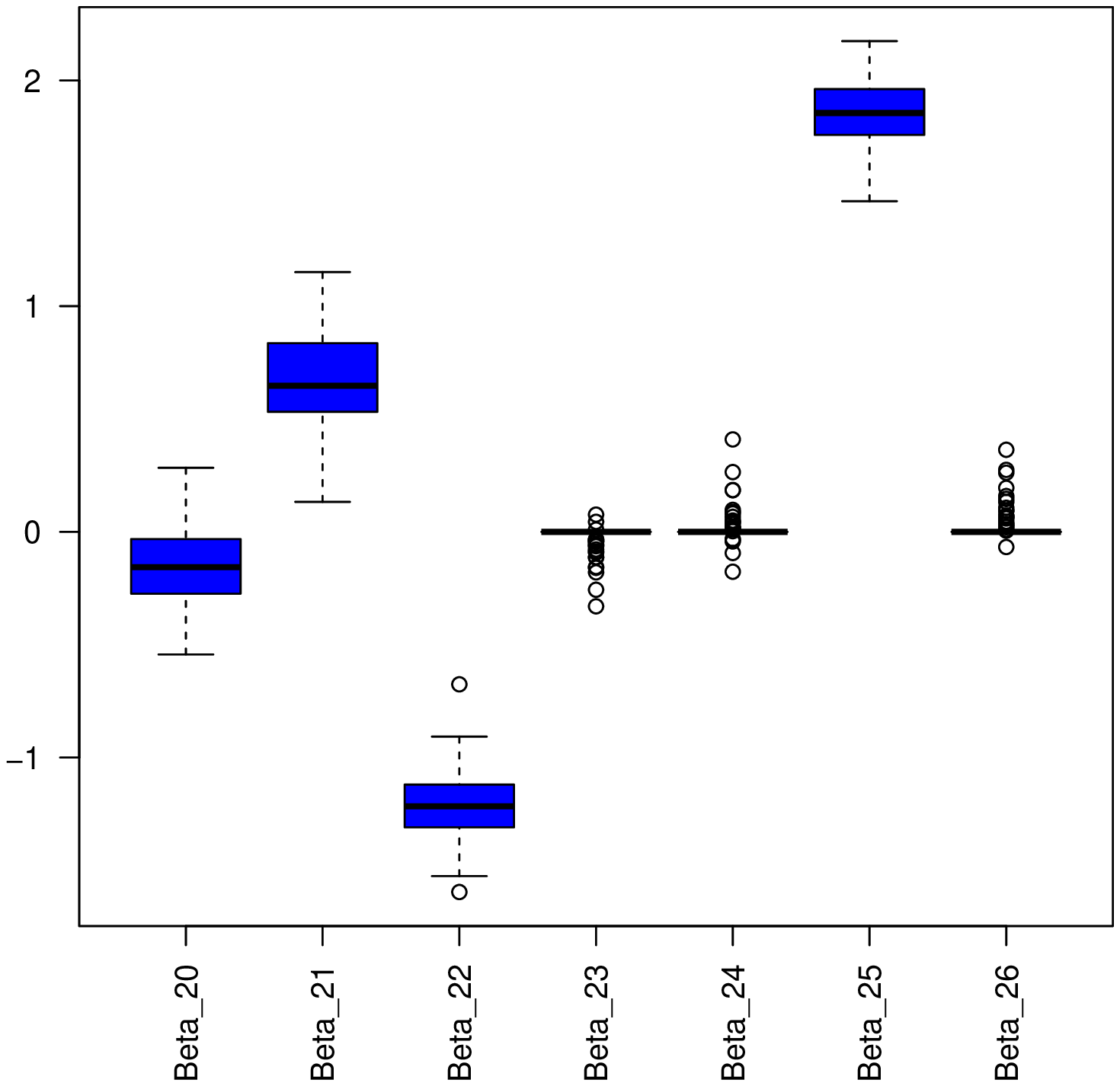}&
\includegraphics[width = 6 cm]{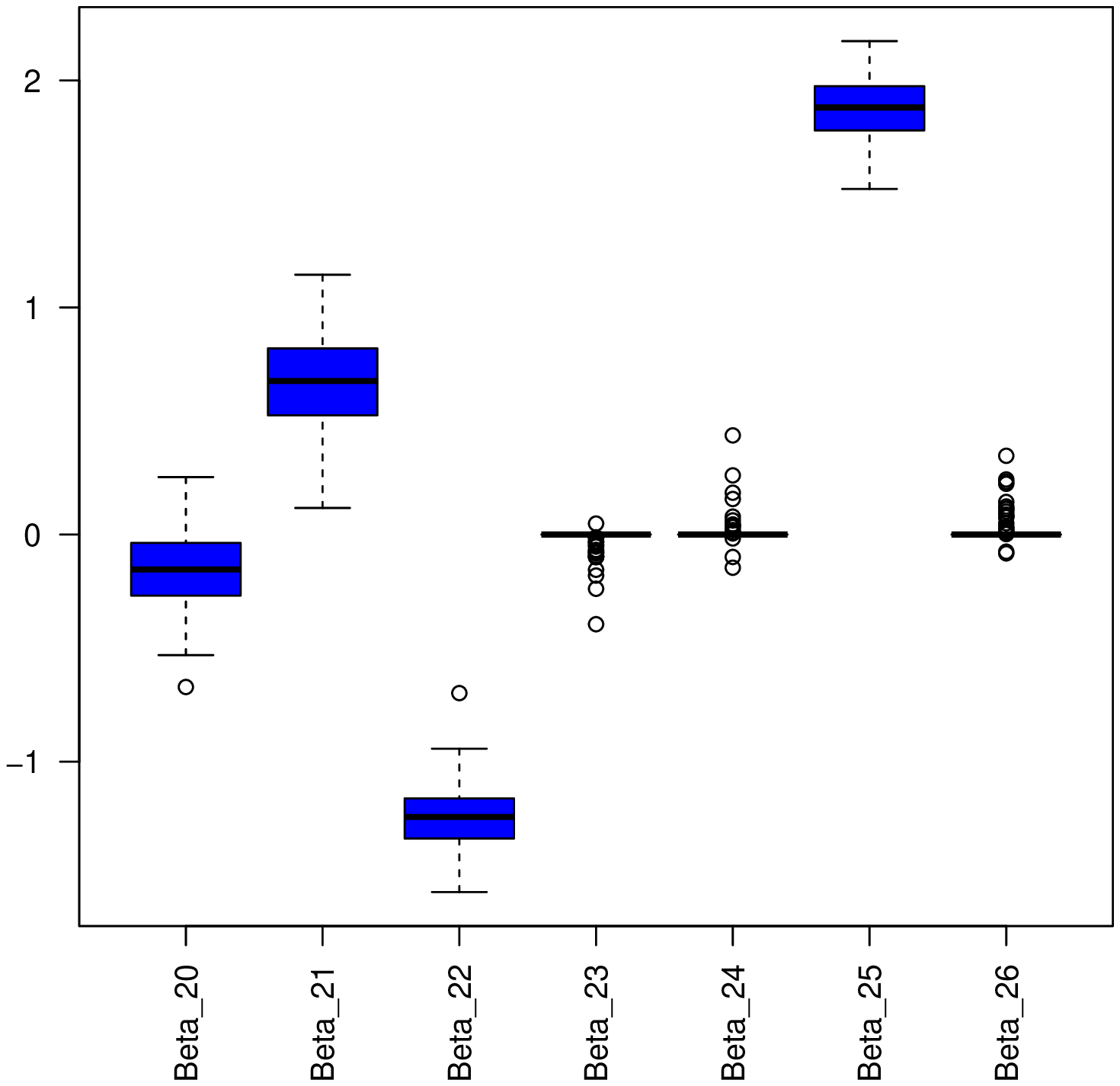}\\
MoE-Lasso + $\ell_2$ (MM) & MoE-Lasso + $\ell_2$ (CA)
\end{tabular}
\caption{Boxplots of the expert 2's parameter $(\beta_{20},\bsbeta_2)^T = (0,1,-1.5,0,0,2,0)^T$.\label{Beta2}}
\end{figure*}

\begin{figure*}[!h]
\centering
\begin{tabular}{cc}
\includegraphics[width = 6 cm]{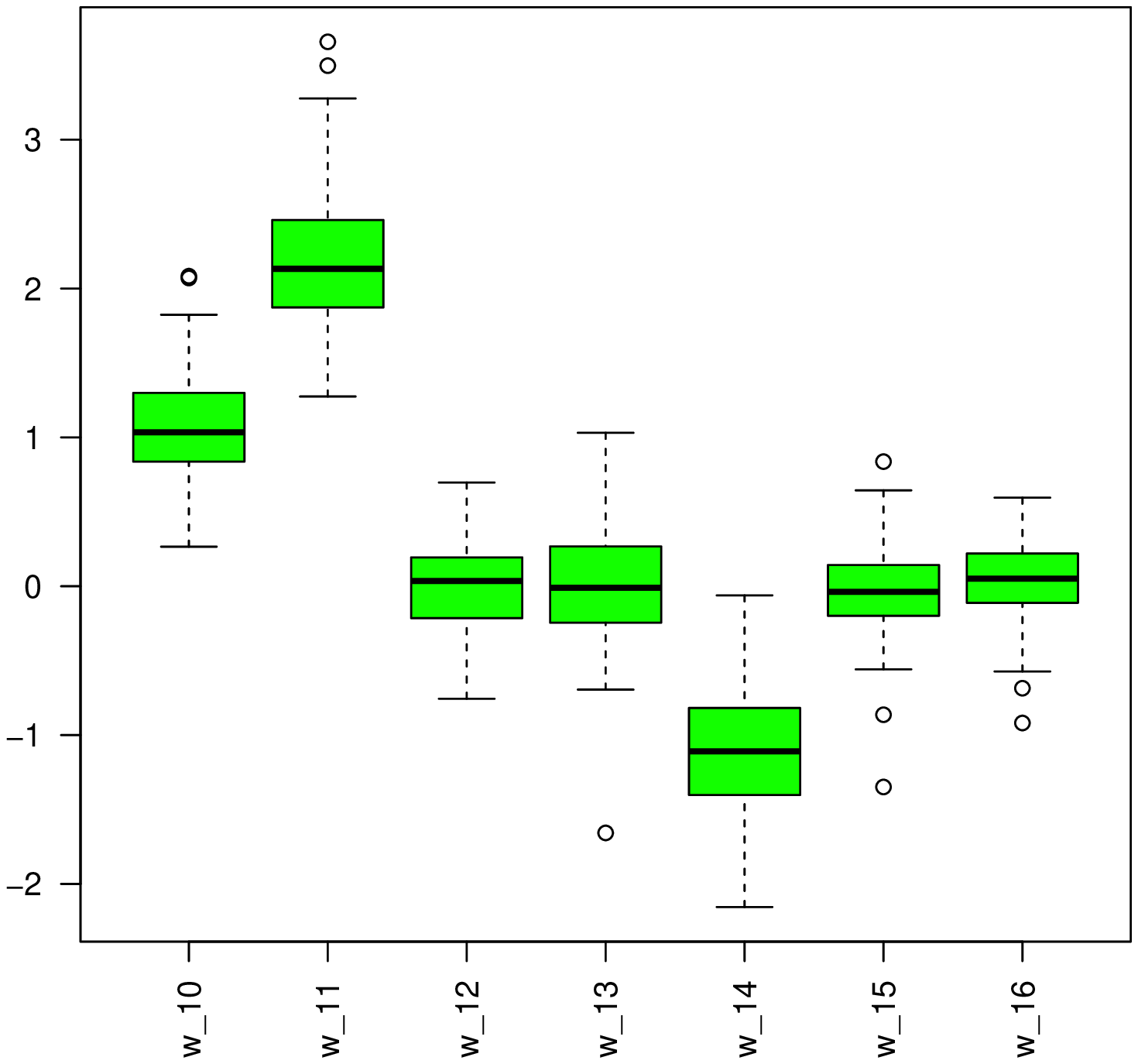} &
\includegraphics[width = 6 cm]{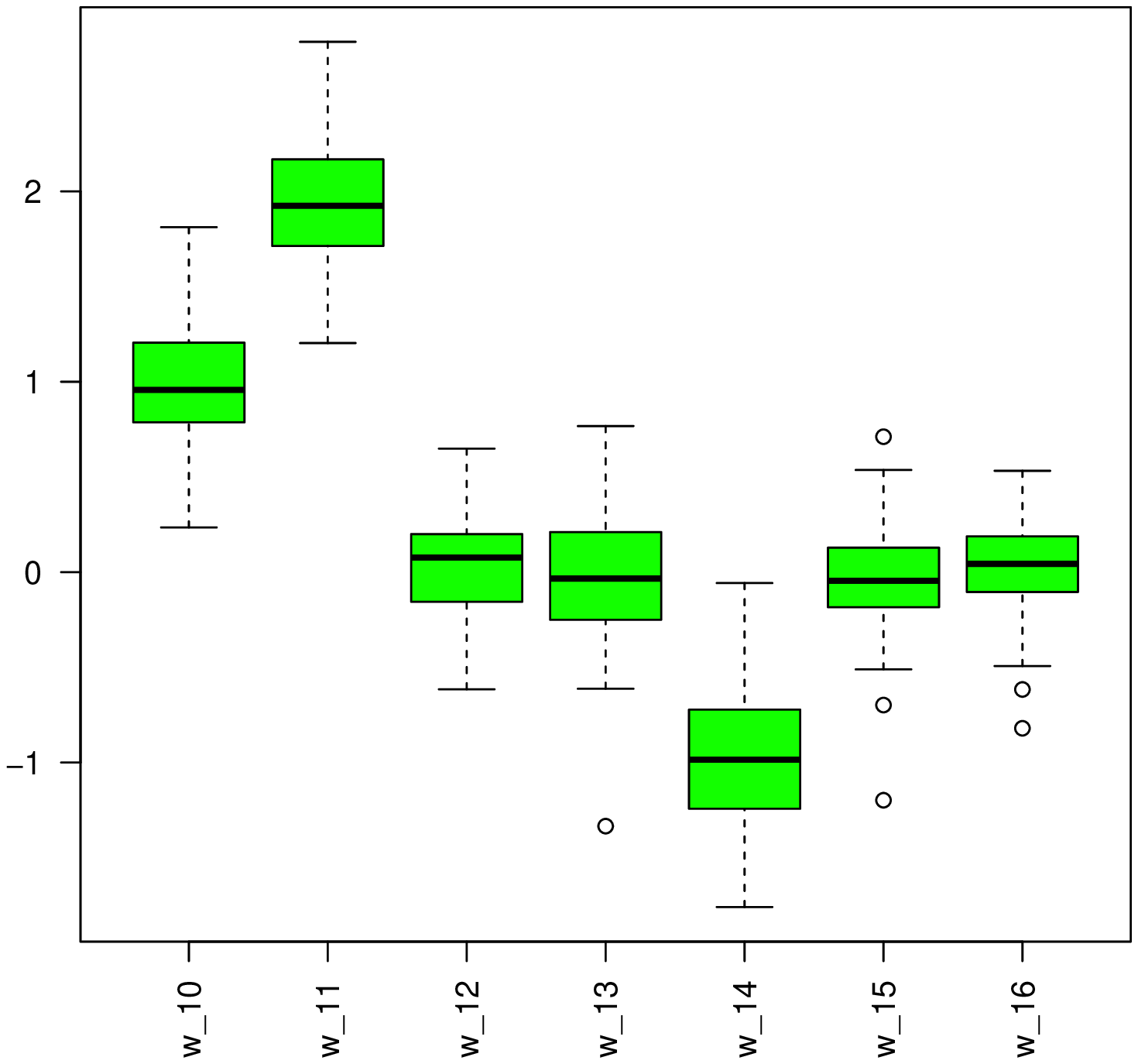} \\
MoE & MoE-$\ell_2$\\  
\includegraphics[width = 6 cm]{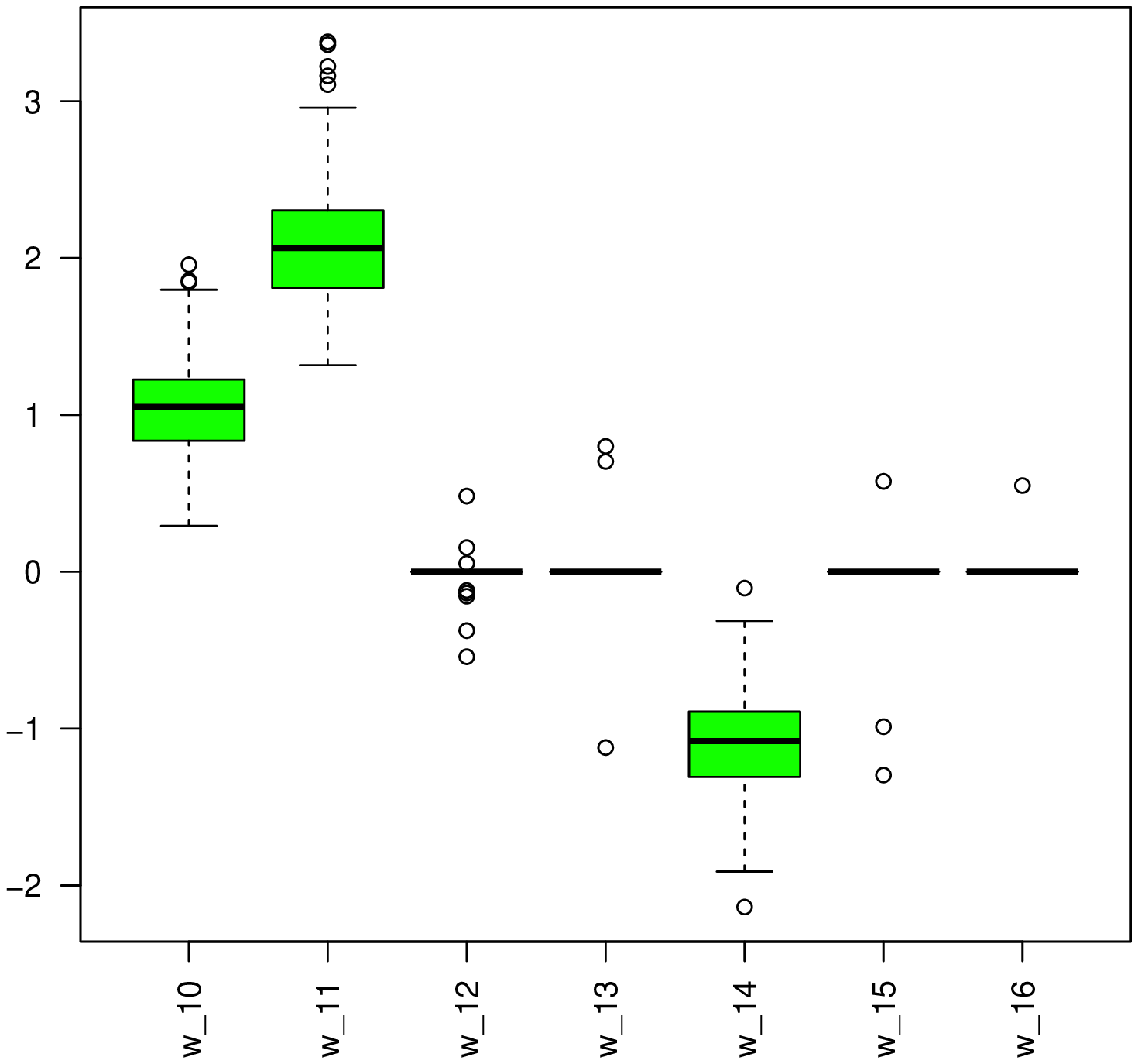}& \\
MoE-BIC &\\
\includegraphics[width = 6 cm]{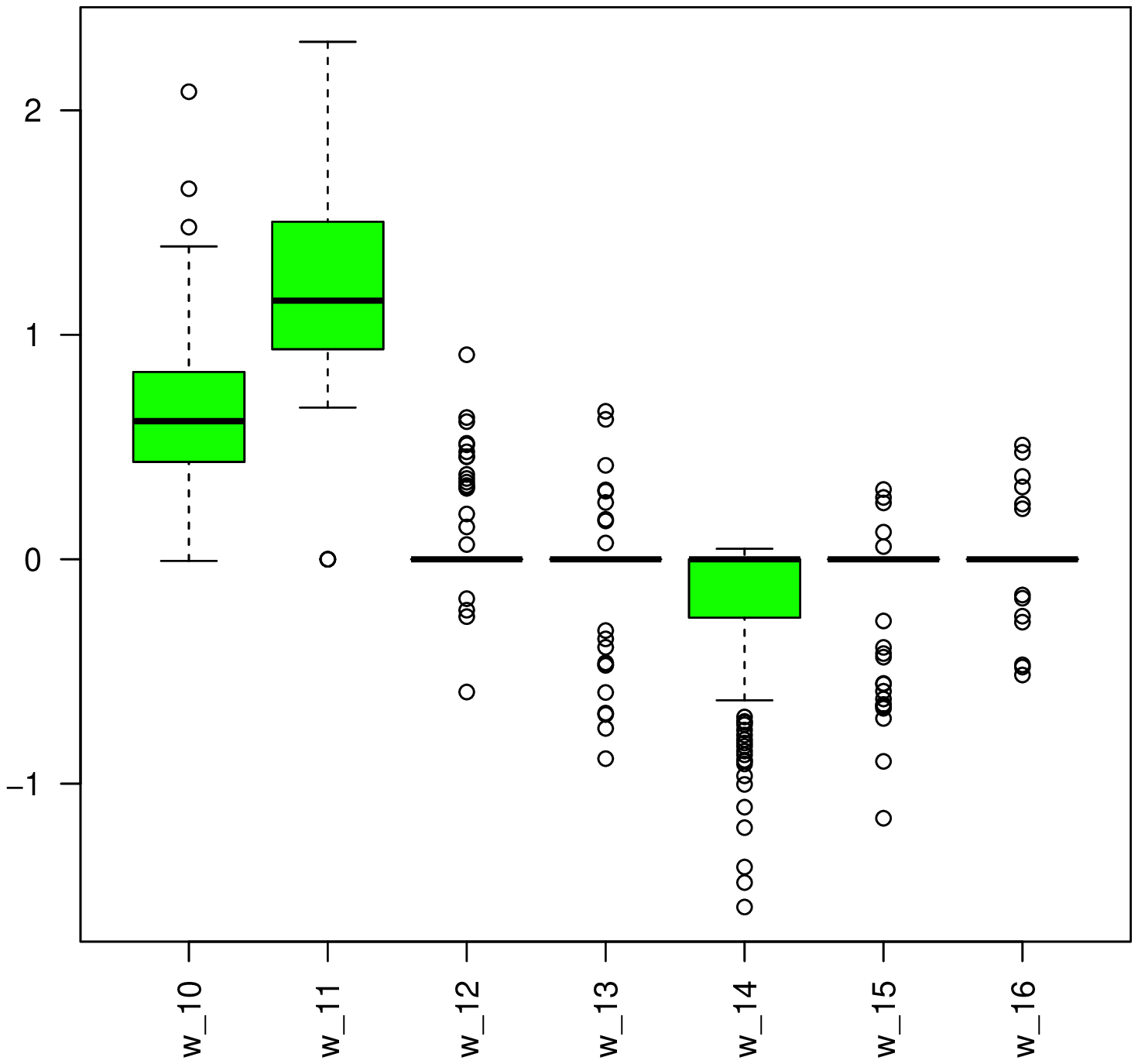}&
 \includegraphics[width = 6 cm]{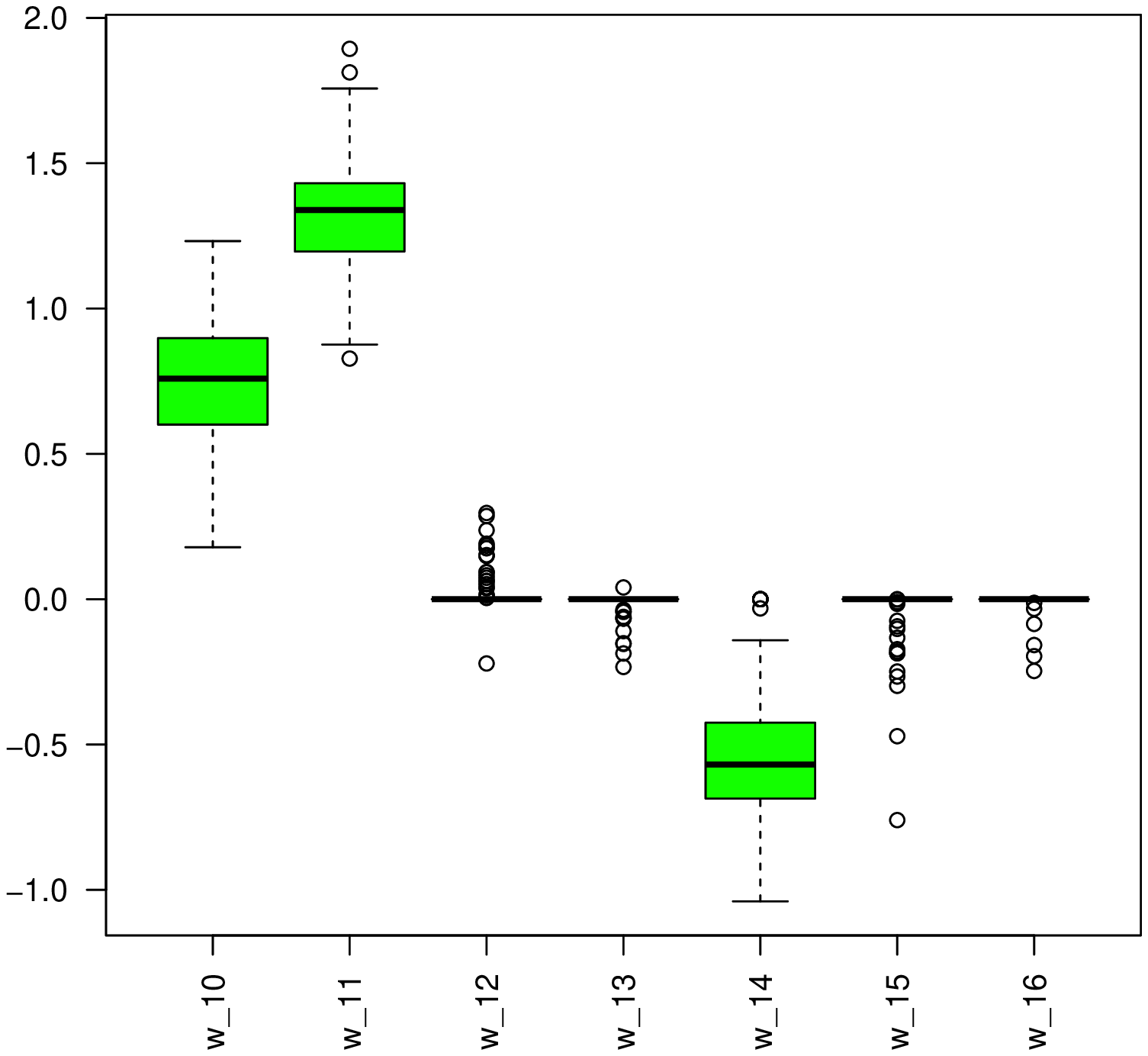} \\
MoE-Lasso + $\ell_2$ (MM) & MoE-Lasso + $\ell_2$ (CA)  
\end{tabular}
\caption{Boxplots of the gate's parameter $(w_{10},\bsw_1)^T = (1,2,0,0,-1,0,0)^T$.\label{Gate}}
\end{figure*}
For the mean and standard derivation results shown in Table \ref{MSTD}, we can see that the model using BIC for selection, the non penalized MoE, and the MoE with $\ell_2$ penalty have better results, while Lasso+$\ell_2$ and MIXLASSO can cause bias to the estimated parameters, since the penalty functions are added to the log-likelihood function. 
In contrast, from Table \ref{MSE}, in terms of average mean squared error, the Lasso+$\ell_2$ and MIXLASSO provide a better result than MoE and the MoE with $\ell_2$ penalty for estimating the zero coefficients. Between the two Lasso+$\ell_2$ algorithms, we see that the algorithm using coordinate ascent can overcome the weakness of the algorithm using MM method: once the coefficient is set to zero, it can reenter nonzero value in the progress of the EM algorithm. The BIC still provides the best result, but as we commented before, it is hard to apply BIC in reality especially for high-dimensional data, since this involves a huge collection of model candidates.
{\footnotesize\begin{table*}[!h]
\centering
\begin{tabular}{|c|c|c|c|c|c|c|c|}
\hline
Comp. & True & MoE & MoE+$\ell_2$ &MoE-BIC& Lasso+$\ell_2$ & Lasso+$\ell_2$ & MIXLASSO\\
& value & & & & (MM) & (CA) & \\
\hline
 & $0$ & $0.010_{(.096)}$ & $0.009_{(.097)}$ &$0.014_{(.083)}$ &$0.031_{(.091)}$  & $0.026_{(.089)}$ & $0.043_{(.093)}$\\
 & $0$ & $-0.002_{(.106)}$ & $-0.002_{(.107)}$ &$-0.003_{(.026)}$ &$0.009_{(.041)}$ & $0.011_{(.046)}$ & $0.011_{(.036)}$\\
 & $1.5$ & $1.501_{(.099)}$ & $1.502_{(.099)}$ &$1.495_{(.075)}$& $1.435_{(.080)}$& $1.435_{(.080)}$ & $1.404_{(.086)}$\\
Exp.1 & $0$ & $0.000_{(.099)}$ & $0.001_{(.099)}$ &$0.000_{(.037)}$&$0.012_{(.042)}$ & $0.013_{(.044)}$ & $0.013_{(.036)}$\\
 & $0$ & $-0.022_{(.102)}$ & $-0.022_{(.102)}$ &$0.002_{(.020)}$&$0.001_{(.031)}$ & $0.000_{(.032)}$ & $0.003_{(.027)}$\\
 & $0$ & $-0.001_{(.097)}$ & $-0.003_{(.097)}$ &$0.000_{(.045)}$& $0.013_{(.044)}$& $0.012_{(.043)}$ & $0.013_{(.040)}$\\
 & $1$ & $1.003_{(.090)}$ & $1.004_{(.090)}$ &$0.998_{(.077)}$& $0.930_{(.082)}$& $0.930_{(.082)}$ & $0.903_{(.088)}$\\
\hline
 & $0$ & $0.006_{(.185)}$ & $0.005_{(.184)}$ &$0.002_{(.178)}$& $-0.158_{(.183)}$& $-0.162_{(.177)}$ & $-0.063_{(.188)}$\\
 & $1$ & $1.007_{(.188)}$ & $1.006_{(.188)}$ &$1.002_{(.187)}$& $0.661_{(.209)}$& $0.675_{(.202)}$ & $0.755_{(.220)}$\\
 & $-1.5$ & $-1.492_{(.149)}$ & $-1.494_{(.149)}$ &$-1.491_{(.129)}$& $-1.216_{(.152)}$& $-1.242_{(.139)}$ & $-1.285_{(.146)}$\\
Exp.2 & $0$ & $-0.011_{(.159)}$ & $-0.012_{(.158)}$ &$-0.005_{(.047)}$& $-0.018_{(.055)}$& $-0.018_{(.055)}$ & $-0.023_{(.071)}$\\
 & $0$ & $-0.010_{(.172)}$ & $-0.008_{(.171)}$ &$-0.006_{(.079)}$& $0.013_{(.061)}$& $0.011_{(.059)}$ & $0.016_{(.075)}$\\
 & $2$ & $2.004_{(.169)}$ & $2.005_{(.169)}$ &$2.003_{(.128)}$& $1.856_{(.150)}$ & $1.876_{(.149)}$ & $1.891_{(.159)}$\\
 & $0$ & $0.008_{(.139)}$ & $0.007_{(.140)}$ &$0.008_{(.053)}$&  $0.022_{(.062)}$ & $0.020_{(.060)}$ & $0.031_{(.086)}$\\
\hline
 & $1$ & $1.095_{(.359)}$ & $1.008_{(.306)}$ &$1.055_{(.328)}$& $0.651_{(.331)}$& $0.759_{(.221)}$ & \\
 & $2$ & $2.186_{(.480)}$ & $1.935_{(.344)}$ &$2.107_{(.438)}$& $1.194_{(.403)}$& $1.332_{(.208)}$ & \\
 & $0$ & $0.007_{(.287)}$ & $0.038_{(.250)}$ &$-0.006_{(.086)}$& $0.058_{(.193)}$ & $0.024_{(.068)}$ & \\
Gate & $0$ & $-0.001_{(.383)}$ & $-0.031_{(.222)}$ &$0.004_{(.1.55)}$& $-0.025_{(.214)}$& $-0.011_{(.039)}$ & N/A\\
 & $-1$ & $-1.131_{(.413)}$ & $-0.991_{(.336)}$ &$-1.078_{(.336)}$& $-0.223_{(.408)}$ & $-0.526_{(.253)}$ & \\
 & $0$ & $-0.022_{(.331)}$ & $-0.033_{(.281)}$ &$-0.017_{(.172)}$& $-0.082_{(.243)}$& $-0.032_{(.104)}$ & \\
 & $0$ & $0.025_{(.283)}$ & $0.016_{(.246)}$ &$0.005_{(.055)}$& $-0.002_{(.132)}$& $-0.007_{(.036)}$ & \\
\hline
$\sigma$ & $1$ & $0.965_{(.045)}$ & $0.961_{(.045)}$ &$0.978_{(.046)}$ & $1.000_{(.052)}$& $0.989_{(.050)}$ & $1.000_{(.053)}$\\
\hline
\end{tabular}
\caption{Mean and standard derivation between each component of the estimated parameter vector of MoE, MoE+$\ell_2$, BIC, Lasso+$\ell_2$ (MM), Lasso+$\ell_2$ (CA) and the MIXLASSO. \label{MSTD}}
\end{table*}}
\begin{table*}[!h]
\centering
\begin{tabular}{|c|c|c|c|c|c|c|c|}
\hline
& & \multicolumn{6}{c|}{Mean squared error}\\
\cline{3-8}
Comp. & True & MoE & MoE+$\ell_2$ & MoE-BIC & Lasso+$\ell_2$ & Lasso+$\ell_2$ & MIXLASSO\\
& value & & &  & (MM) & (CA) & \\
\hline
 & $0$ & $0.0093_{(.015)}$ & $0.0094_{(.015)}$ &${0.0070_{(.011)}}$&$0.0092_{(.015)}$& $0.0087_{(.014)}$ & $0.0106_{(.016)}$\\
 & $0$ & $0.0112_{(.016)}$ & $0.0114_{(.017)}$ &${0.0007_{(.007)}}$&$0.0018_{(.005)}$& $0.0022_{(.008)}$ & $0.0014_{(.005)}$\\
 & $1.5$ & $0.0098_{(.014)}$ & $0.0098_{(.015)}$ &${0.0057_{(.007)}}$&$0.0106_{(.012)}$& $0.0107_{(.012)}$ & $0.0166_{(.019)}$\\
Exp.1 & $0$ & $0.0099_{(.016)}$ & $0.0099_{(.016)}$ &${0.0013_{(.009)}}$&$0.0019_{(.005)}$& $0.0021_{(.006)}$ & $0.0015_{(.005)}$\\
 & $0$ & $0.0108_{(.015)}$ & $0.0109_{(.016)}$ &$0.0004_{(.004)}$& $0.0010_{(.004)}$& ${0.0001_{(.004)}}$ & $0.0007_{(.003)}$\\
 & $0$ & $0.0094_{(.014)}$ & $0.0094_{(.014)}$ &$0.0020_{(.010)}$&$0.0021_{(.007)}$& $0.0020_{(.006)}$ & ${0.0017_{(.008)}}$\\
 & $1$ & $0.0081_{(.012)}$ & $0.0082_{(.012)}$ &${0.0059_{(.009)}}$&$0.0117_{(.015)}$& $0.0116_{(.015)}$ & $0.0172_{(.021)}$\\
\hline
 & $0$ & $0.0342_{(.042)}$ & $0.0338_{(.042)}$ &${0.0315_{(.049)}}$&$0.0585_{(.072)}$& $0.0575_{(.079)}$ & $0.0392_{(.059)}$\\
 & $1$ & $0.0355_{(.044)}$ & $0.0354_{(.044)}$ &${0.0350_{(.044)}}$&$0.1583_{(.157)}$& $0.1465_{(.148)}$ & $0.1084_{(.130)}$\\
 & $-1.5$ & $0.0222_{(.028)}$ & $0.0221_{(.028)}$ &${0.0166_{(.240)}}$&$0.1034_{(.098)}$& $0.0860_{(.087)}$ & $0.0672_{(.070)}$\\
Exp.2 & $0$ & $0.0253_{(.032)}$  & $0.0252_{(.031)}$ &${0.0022_{(.022)}}$&$0.0033_{(.013)}$& $0.0034_{(.017)}$ & $0.0056_{(.022)}$\\
 & $0$ & $0.0296_{(.049)}$ & $0.0294_{(.049)}$ &$0.0063_{(.032)}$&$0.0039_{(.019)}$& ${0.0037_{(.020)}}$ & $0.0059_{(.023)}$\\
 & $2$ & $0.0286_{(.040)}$ & $0.0287_{(.040)}$ &${0.0163_{(.023)}}$&$0.0432_{(.056)}$& $0.0375_{(.050)}$ & $0.0371_{(.051)}$\\
 & $0$ & $0.0195_{(.029)}$ & $0.0195_{(.029)}$ &${0.0028_{(.020)}}$&$0.0043_{(.017)}$ & $0.0040_{(.015)}$ & $0.0083_{(.028)}$\\
\hline
 & $1$ & $0.1379_{(.213)}$ & ${0.0936_{(.126)}}$ &$0.1104_{(.178)}$&$0.2315_{(.240)}$& $0.1067_{(.125)}$ & \\
 & $2$ & $0.2650_{(.471)}$ & ${0.1225_{(.157)}}$ &$0.2035_{(.371)}$&$0.8123_{(.792)}$& $0.4890_{(.277)}$ & \\
 & $0$ & $0.0825_{(.116)}$ & $0.0641_{(.086)}$ &$0.0075_{(.040)}$&$0.0404_{(.032)}$& ${0.0052_{(.015)}}$ & \\
Gate & $0$ & $0.1466_{(.302)}$ & $0.1052_{(.196)}$ & $0.0239_{(.147)}$&$0.0501_{(.050)}$& ${0.0017_{(.007)}}$ & N/A\\
 & $-1$ & $0.1875_{(.263)}$ & ${0.1129_{(.148)}}$ &$0.1189_{(.191)}$ &$0.7703_{(.760)}$& $0.2885_{(.295)}$ & \\
 & $0$ & $0.1101_{(.217)}$ & $0.0803_{(.164)}$ &$0.0299_{(.195)}$&$0.0656_{(.066)}$& ${0.0120_{(.062)}}$ & \\
 & $0$ & $0.0806_{(.121)}$& $0.0610_{(.095)}$  &$0.0030_{(.030)}$&$0.0175_{(.018)}$& ${0.0013_{(.008)}}$ & \\
\hline
$\sigma$ & $1$ & $0.0033_{(.004)}$ & $0.0035_{(.004)}$ &${0.0026_{(.003)}}$&$0.0027_{(.003)}$& $0.0027_{(.003)}$ & $0.0028_{(.003)}$\\
\hline
\end{tabular}
\caption{Mean squared error between each component of the estimated parameter vector of MoE, MoE+$\ell_2$, BIC, Lasso+$\ell_2$ (MM), Lasso+$\ell_2$ (CA) and the MIXLASSO. \label{MSE}}
\end{table*}
\subsubsection{Clustering}
We calculate the accuracy of clustering of all these mentioned models for each data set. 
The results in terms of ARI and correct classification rate values are provided in Table \ref{Cluster}. We  can see that the Lasso+$\ell_2$ (CA) model provides a good result for clustering data. The BIC model gives the best result but always with a very significant computational load. The difference between Lasso+$\ell_2$ (CA) and BIC is smaller than $1\%$, while the MIXLASSO provides a poor result in terms of clustering. Here, we also see that the Lasso+$\ell_2$ (MM) estimates the parameters in the experts quite well. However, the MM algorithm for updating the gate's parameter causes bad effect, since this approach forces the non-zero coefficient $w_{14}$ toward zero. Hence, this may decrease the clustering performance.
\begin{table}[!h]
\centering
{\small\begin{tabular}{|c|c|c|}
\hline
Model & C.rate & ARI\\
\hline
MoE & $89.57\%_{(1.65\%)}$ & $0.6226_{(.053)}$\\
MoE+$\ell_2$ & $89.62\%_{(1.63\%)}$ & $0.6241_{(.052)}$\\
MoE-BIC & ${ 90.05\%_{(1.65\%)}}$ & ${0.6380_{(.053)}}$\\
Lasso+$\ell_2$ (MM) & $87.76\%_{(2.19\%)}$ & $0.5667_{(.067)}$\\
Lasso+$\ell_2$ (CA) & $89.46\%_{(1.76\%)}$ &  $0.6190_{(.056)}$\\
MIXLASSO & $82.89\%_{(1.92\%)}$ & $0.4218_{(.050)}$\\
\hline
\end{tabular}}
\caption{Average of the accuracy of clustering (correct classification rate and Adjusted Rand Index).\label{Cluster}}
\end{table}\\
Overall, we can clearly see the Lasso+$\ell_2$ (CA) algorithm performs quite well to retrieve the actual sparse support; the sensitivity and specificity results are quite reasonable for the proposed Lasso+$\ell_2$ regularization. While the penalty function will cause bias to the parameters, as shown in the results of the MSE, the algorithm can  perform parameter density estimation  with an acceptable loss of information due to the bias induced by the regularization. 
In terms of clustering, the Lasso+$\ell_2$ (CA) works as well as two other MoE models and BIC, better than the Lasso+$\ell_2$ (MM), MIXLASSO models.
\subsection{Applications to real data sets}

We analyze two real data sets as a further test of the methodology. Here, we investigate the housing data described on the website UC Irvine Machine Learning Repository and baseball salaries from the Journal of Statistics Education (www.amstat.org/publications/jse). This was done to provide a comparison with the work of \cite{Kha10}, \cite{Kha07}. While in  \cite{Kha07} the authors used Lasso-penalized mixture of linear regression (MLR) models, we still apply penalized mixture of experts (to better represent  the data than when using MRL models). 
We compare the results of each model based upon two different criteria: the average mean squared error (MSE) between observation values of the response variable and the predicted values of this variable; we also consider the correlation of these values. After the parameters are estimated, the following expected value under the estimated model
\begin{align*}
\mathbb{E}_{\hat{\bstheta}}(Y|\bsx) &= \sum\limits_{k=1}^K\pi_k(\bsx;\hat{\bsw})\mathbb{E}_{\hat{\bstheta}}(Y|Z=k,\bsx)\\
 & = \sum\limits_{k=1}^K\pi_k(\bsx;\hat{\bsw})(\hat{\beta}_{k0}+\bsx^T\hat{\bsbeta}_k),
 \end{align*}
is used as a predicted value for $Y$. 
We note that here for the real data we do not consider the MoE model with BIC selection since it is computationally expensive.
\subsubsection{Housing data}
This data set concerns houses' value in the suburbs of Boston. 
It contains $506$ observations and $13$ features that may affect the house value. 
These features are: Per capita crime rate by town $(x_1)$; proportion of residential land zoned for lots over $25,000$ sq.ft. $(x_2)$; proportion of non-retail business acres per town $(x_3)$;  Charles River dummy variable ($= 1$ if tract bounds river; $0$ otherwise) $(x_4)$; nitric oxides concentration (parts per 10 million) $(x_5)$; average number of rooms per dwelling $(x_6)$; proportion of owner-occupied units built prior to $1940$ $(x_7)$; weighted distances to five Boston employment centres $(x_8)$; index of accessibility to radial highways $(x_9)$; full-value property-tax rate per $\$10,000$ $(x_{10})$;  pupil-teacher ratio by town $(x_{11})$; $1000(Bk - 0.63)^2$ where $Bk$ is the proportion of blacks by town $(x_{12})$;  $\%$ lower status of the population $(x_{13})$. 
The columns of $X$ were standardized to have mean $0$ and variance $1$. 
The response homes in variable of interest is the median value of owner occupied homes in $\$1000's$, MEDV. Based on the histogram of $Y$ = MEDV/sd(MEDV), where sd(MEDV) is the standard deviation of MEDV, Khalili decided to separate $Y$ into two groups of houses with $``$low$"$ and $``$high$"$ values. Hence, a MoE model is used to  fit the response
$$Y\sim\pi_1(\bsx;\bsw)\mathcal{N}(y; \beta_{10}+ \bsx^T\bsbeta_1,\sigma^2) + (1-\pi_1(\bsx;\bsw))\mathcal{N}(y; \beta_{20}+ \bsx^T\bsbeta_2,\sigma^2),$$ 
where $\displaystyle\pi_1(\bsx;\bsw) = \frac{e^{w_{10}+\bsx^T \bsw_1}}{1+e^{w_{10}+\bsx^T \bsw_1}}$.
The parameter estimates of the MoE models obtained by Lasso+$\ell_2$ and MLE are given in Table \ref{House}. We compare our results with those of Khalili and the non-penalized MoE. 
\begin{table}
\begin{center}
\begin{tabular}{|c|c|c|c|c|c|c|c|c|c|}
\hline
Features & \multicolumn{3}{|c}{MLE, $\hat{\sigma} = 0.320$} & \multicolumn{3}{|c}{Lasso+$\ell_2$ (Khalili), $\hat{\sigma} = 0.352$} & \multicolumn{3}{|c|}{Lasso+$\ell_2$, $\hat{\sigma} = 0.346$}\\
\cline{2-10}
\ & Exp.1 & Exp.2 & Gate& Exp.1 & Exp.2 & Gate& Exp.1 & Exp.2 & Gate\\
\hline
$x_0$ & 2.23 & 3.39 & 19.17 & 2.16 & 2.84 & 1.04 & 2.20 & 2.82 & 0.79\\
$x_1$ & -0.12 & 3.80 & -4.85 & -0.09 & - & - & -0.09 & - & -\\
$x_2$ & 0.07 & 0.04 & -5.09 & - & 0.07 & - & - & 0.07 & -\\
$x_3$ & 0.05& -0.03 & 7.74 & - & - & 0.67 & - & - & 0.41\\
$x_4$ & 0.03 & -0.01 & -1.46 & - & 0.05 & - & 0.05 & 0.06 & - \\
$x_5$ & -0.18 & -0.16 & 9.39 & - & - & - & -0.08 & - & - \\
$x_6$ & -0.01 & 0.63 & 1.36 & - & 0.60 & -0.27 & - & 0.56 & - \\
$x_7$ & -0.06 & -0.07 & -8.34 & - & - & - & -0.05 & - & - \\
$x_8$ &-0.20 & -0.21 & 8.81 & - & -0.20 & - &-0.03 & -0.19 & - \\
$x_9$ & 0.02 & 0.31 & 0.96 & - &0.55 & - & - & 0.60 & - \\
$x_{10}$ &-0.19 & -0.33 & -0.45 & - & - & - & -0.01 & - & - \\
$x_{11}$ &-0.14 & -0.18 & 7.06 & - & - & 0.54 & -0.10 & -0.08 & 0.28\\
$x_{12}$ &0.06 & 0.01 & -6.17 & 0.05 & - & - & 0.05 & - & - \\
$x_{13}$ &-0.32 & -0.73 & 36.27& -0.29 & -0.49 & 1.56 & -0.29 &-0.57 & 1.05 \\ 
\hline
\end{tabular}
\caption{Fitted models for housing data.\label{House}}
\end{center}
\end{table}
In Table \ref{ResultHousing}, we provide the result in terms of average MSE and the correlation between the true observation value $Y$ and its prediction $\hat{Y}$. Our result provides a least sparse model than Khalili's. Some parameters in both methods have the same value. However, the MSE and the correlation from our method are better than those of Khalili. Hence, in application one would consider the sparsity and the prediction of each estimated parameters. Both Lasso+$\ell_2$ algorithms give comparative results with the MLE.
\begin{table}[!h]
    \centering
    \begin{tabular}{|c|c|c|c|}
    \hline
         & MoE & Lasso+$\ell_2$ (Khalili) & Lasso+$\ell_2$ \\
         \hline
        $R^2$ & 0.8457 & 0.8094 & 0.8221\\
        \hline
        MSE & $0.1544_{(.577)}$ & $0.2044_{(.709)}$ & $0.1989_{(.619)}$\\
        \hline
    \end{tabular}
    \caption{Results for Housing data set.}
    \label{ResultHousing}
\end{table}
\subsubsection{Baseball salaries data}
We now consider baseball salaries data set from the Journal of Statistics Education (see also \cite{Kha07}) as a further test of the methodology. 
This data set contains $337$ observations and $33$ features. 
We compare our results with the non-penalized MoE models and the MIXLASSO models (see \cite{Kha07}). 
 \cite{Kha07} used this data set in the analysis, which included an addition of $16$ interaction features, making in total $32$ predictors. The columns of $\bsX$ were standardized to have mean $0$ and variance $1$. Histogram of the log of salary shows multimodality making it a good candidate for the response variable under the MoE model with two components:
$$Y = \log(salary)\sim  \pi_1(\bsx;\bsw)\mathcal{N}(y; \beta_{10}+ \bsx^T\bsbeta_1,\sigma^2) 
 + (1-\pi_1(\bsx;\bsw))\mathcal{N}(y; \beta_{20}+ \bsx^T\bsbeta_2,\sigma^2).$$
By taking all the tuning parameters to zero, we obtain the maximum likelihood estimator of the model. We also compare our result with MIXLASSO from \cite{Kha07}. 
Table \ref{TableBB} presents the estimated parameters for baseball salary data and Table \ref{ResultBB} shows the results in terms of MSE, and $R^2$ between the true value of $Y$ and its predicted value. 
These results suggest that the proposed algorithm with the Lasso+$\ell_2$ penalty also shrinks some parameters to zero and have acceptable results compared to MoE. It also shows that this model provides better results than that of the MIXLASSO model. 
%
\begin{table*}[!h]
\begin{center}
\begin{tabular}{|c|c|c|c|c|c|c|c|c|}
\hline
Features & \multicolumn{3}{|c}{MLE, $\widehat{\sigma} = 0.277$} & \multicolumn{3}{|c}{Lasso+$\ell_2$, $\widehat{\sigma} = 0.345$} & \multicolumn{2}{|c|}{MIXLASSO, $\widehat{\sigma} = 0.25$}\\
\cline{2-9}
\ & Exp.1 & Exp.2 & Gate & Exp.1 & Exp.2 & Gate & Exp.1 & Exp.2\\
\hline
$x_0$ & 6.0472 & 6.7101 & -0.3958 & 5.9580 & 6.9297 & 0.0046 & 6.41 & 7.00\\
$x_1$ & -0.0073 & -0.0197 & 0.1238 & -0.0122 & - & - & - & -0.32\\
$x_2$ & -0.0283 & 0.1377 & 0.1315 & -0.0064 & - & - & - & 0.29\\
$x_3$ & 0.0566 & -0.4746 & 1.5379 & - & - & - & - & -0.70\\
$x_4$ & 0.3859 & 0.5761 & -1.9359 & 0.4521 & 0.0749 & - & 0.20 & 0.96\\
$x_5$ & -0.2190 & -0.0170 & -0.9687 & - & - & - & - & -\\
$x_6$ & -0.0586 & 0.0178 & 0.4477 & -0.0051 & - & - & - & -\\
$x_7$ & -0.0430 & 0.0242 & -0.3682 & - & - & - & -0.19 & -\\
$x_8$ & 0.3991 & 0.0085 & 1.7570 & - &0.0088 & - & 0.26 & -\\
$x_9$ & -0.0238 & -0.0345 & -1.3150 & 0.0135 & 0.0192 & - & - & -\\
$x_{10}$ & -0.1944 & 0.0412 & 0.6550 & -0.1146 & - & - & - & -\\
$x_{11}$ & 0.0726 & 0.1152 & 0.0279 & -0.0108 & 0.0762 & - & - & -\\
$x_{12}$ & 0.0250 & -0.0823 & 0.1383 & - & - & - & - & -\\
$x_{13}$ & -2.7529 & 1.1153 & -7.0559 & - & 0.3855 &-0.3946& 0.79 & 0.70\\
$x_{14}$ & 2.3905 & -1.4185 & 5.6419 & 0.0927 & -0.0550 & - & 0.72 & -\\
$x_{15}$ & -0.0386 & 1.1150 & -2.8818 & 0.3268 & 0.3179 & - & 0.15 & 0.50\\ 
$x_{16}$ & 0.2380 & 0.0917 & -7.9505 & - & - & - & - & -0.36\\
$x_1*x_{13}$ & 3.3338 & -0.8335 & 8.7834 & 0.3218 & - & - & -0.21 & -\\ 
$x_1*x_{14}$ & -2.4869 & 2.5106 & -7.1692 & - & - & - & 0.63 & -\\ 
$x_1*x_{15}$ & 0.4946 & -0.9399 & 2.6319 & - & - & - & 0.34 & -\\
$x_1*x_{16}$ & -0.4272 & -0.4151 & 7.9715 & -0.0319 & - & - & - & -\\
$x_3*x_{13}$ & 0.7445 & 0.3201 & 0.5622 & - & 0.0284 & -0.5828 & - & -\\
$x_3*x_{14}$ & -0.0900 & -1.4934 & 0.1417 & -0.0883 & - & - & 0.14 & -0.38\\
$x_3*x_{15}$ & -0.2876 & 0.4381 & -0.9124 & - & - & - & - & -\\
$x_3*x_{16}$ & -0.2451 & -0.2242 & -5.6630 & - & - & - & -0.18 & 0.74\\
$x_7*x_{13}$ & 0.7738 & 0.1335 & 4.3174 & - & 0.004 & - & - & -\\
$x_7*x_{14}$ & -0.1566 & 1.2809 & -3.5625 & -0.1362 & 0.0245 & - & - & -\\
$x_7*x_{15}$ & -0.0104 & 0.2296 & -0.4348 & - & - & - & - & 0.34\\
$x_7*x_{16}$ & 0.5733 & -0.2905 & 3.2613 & - & - & - & - & -\\
$x_8*x_{13}$ & -1.6898 & -0.0091 & -8.7320 & - & 0.2727 & -0.3628 & 0.29 & -0.46\\
$x_8*x_{14}$ & 0.7843 & -1.3341 & 6.2614 & - & 0.0133 & - & -0.14 & -\\
$x_8*x_{15}$ & 0.3711 & -0.4310 & 0.8033 & 0.3154 & - & - & - & -\\
$x_8*x_{16}$ & -0.2158 & 0.7790 & 2.6731 & 0.0157 & - & - & - & -\\
\hline
\end{tabular}
\end{center}
\caption{Fitted models for baseball salary data. \label{TableBB}}
\end{table*}
\begin{table}[!h]
    \centering
    \begin{tabular}{|c|c|c|c|}
    \hline
         & MoE & Lasso+$\ell_2$ & MIXLASSO\\
         \hline
        $R^2$ & 0.8099 & 0.8020 & 0.4252\\
        \hline
        MSE & $0.2625_{(.758)}$ & $0.2821_{(.633)}$ & $1.1858_{(2.792)}$\\
        \hline
    \end{tabular}
    \caption{Results for Baseball salaries data set.}
    \label{ResultBB}
\end{table}

\section{Discussion for the high-dimensional setting}\label{sec:Discuss}
Indeed, the developed MM and coordinate ascent algorithms for the estimation of the parameters of our model could be slow in a high-dimensional setting since we do not have the closed-form updates of the parameters of the gating network $\bsw$ at each step of the EM algorithm; while a univariate Newton-Raphson is derived to avoid matrix inversion operations, it is still slow in high-dimension. 
However,  as we very recently developed it, this difficulty can be overcome by a proximal Newton algorithm. The idea is that, for updating the parameters of the gating network $\bsw$,  rather than maximizing $Q(\bsw;\bf{\theta}^{[q]})$ which is non-smooth and non-quadratic, we maximize an  approximate of the smooth part of $Q(\bsw;\bf{\theta}^{[q]})$ by its local quadratic form by using Taylor expansion around the current parameter estimate, $\tilde{Q}(\bsw;\bf{\theta}^{[q]})$. For more details on the proximal Newton methods, we refer to \cite{S.Lee06}, \cite{Friedman10} and \cite{Lee14}. 
%
The resulting proximal function $\tilde{Q}(\bsw;\bf{\theta}^{[q]})$ is
 then maximized, by using a
 coordinate ascent algorithm, but which has a closed-form update at each step, and thus also still avoid computing matrix inversions. 
Hence, this new algorithm improves the running time of the EMM algorithm with MM and coordinate ascent algorithm, and performs quite well in a high-dimensional setting. 
The R code we publicly provide also contains this version.

To evaluate the algorithm in a situation in which we have a high number of features, we consider the Residential Building Data Set (UCI Machine Learning Repository). 
This data set contains $372$ and $108$ features with the two response variables (V-9 and V-10), which represent the sale prices and construction costs. 
We choose the V-9 variable (sale prices) as the response variable to be predicted.
All the features are standardized to have zero-mean and unit-variance. 
We provide the results of our algorithm with $K=3$ expert components and $\lambda = 15$, $\gamma = 5$. 
The estimated parameters are given in Table \ref{RBData1} and \ref{RBData2}.
The correlation and the mean squared error between the true value V-9 with its prediction can be found in Table \ref{RBResult}.
These results show that the proximal Newton method performs well in this setting, in which it provides a sparse model and competitive criteria in prediction and clustering.  
\begin{table}[!h]
\footnotesize
\begin{center}
\begin{tabular}{|c|c|c|c||c|c|}
\hline
Features & \multicolumn{3}{c||}{Expert,\ $\sigma = 0.0255$}&\multicolumn{2}{c|}{Gating network}\\
\cline{2-6}
& Exp.1 & Exp.2 & Exp.3 & Gate.1 & Gate.2\\
\hline
$x_{0}$	&-0.00631	&-0.01394	&-0.07825	&0.43542	&2.40874\\
$x_{1}$	&-	&-	&0.00599	&-	&-\\
$x_{2}$	&0.02946	&-0.00442	&-	&-	&-\\
$x_{3}$	&-	&-	&0.00849	&-	&-\\
$x_{4}$	&-0.00776	&0.00406	&0.01485	&-	&-\\
$x_{5}$	&-0.00619	&-0.00759	&-0.04185	&-0.23943	&-\\
$x_{6}$	&0.00125	&0.02581	&-	&-	&-\\
$x_{7}$	&-	&-0.01823	&0.00233	&-	&-\\
$x_{8}$	&0.02271	&-0.01962	&0.01964	&-0.04267	&-\\
$x_{9}$	&0.06822	&0.00274	&0.02101	&-	&-\\
$x_{10}$	&-0.03166	&-0.00008	&-	&-	&-\\
$x_{11}$	&0.12789	&0.05117	&0.03515	&-	&-0.91114\\
$x_{12}$	&1.10946	&1.00213	&0.78915	&0.22049	&-0.71761\\
$x_{13}$	&0.00878	&-0.00647	&-	&0.41648	&-\\
$x_{14}$	&-	&-	&-	&-	&-\\
$x_{15}$	&-	&-	&-	&-	&-\\
$x_{16}$	&-0.01495	&-0.00103	&0.03774	&-	&-\\
$x_{17}$	&-	&-	&-	&-	&-\\
$x_{18}$	&-	&-0.03344	&-	&-	&-\\
$x_{19}$	&-	&0.06296	&-	&-	&-\\
$x_{20}$	&0.04560	&0.02466	&-	&-	&-\\
$x_{21}$	&0.02368	&0.03210	&-	&-	&-\\
$x_{22}$	&-	&-0.00546	&-0.00398	&-	&-\\
$x_{23}$	&-	&-0.03934	&-	&-	&-\\
$x_{24}$	&-	&-0.04612	&-	&-	&-\\
$x_{25}$	&0.01205	&-0.00352	&-	&-	&-\\
$x_{26}$	&-	&-	&-	&-	&-\\
$x_{27}$	&-	&0.00409	&-	&-	&-\\
$x_{28}$	&-	&-	&-	&-	&-\\
$x_{29}$	&-	&-	&0.00047	&-	&-\\
$x_{30}$	&-	&-	&-	&-	&-\\
$x_{31}$	&-	&0.03494	&0.04131	&-	&-\\
$x_{32}$	&-	&-0.00003	&0.02288	&-	&-\\
$x_{33}$	&-	&-	&-	&-	&-\\
$x_{34}$	&-	&-	&-	&-	&-\\
$x_{35}$	&-	&0.01468	&-0.01095	&-	&-\\
$x_{36}$	&-	&-	&-	&-	&-\\
$x_{37}$	&-	&0.00899	&-	&-	&-\\
$x_{38}$	&-	&0.00061	&-	&-	&-\\
$x_{39}$	&-0.01694	&-0.00559	&-	&-	&-\\
$x_{40}$	&0.10214	&0.02533	&-	&0.07086	&-\\
$x_{41}$	&0.03770	&-	&-	&-	&-\\
$x_{42}$	&-	&-0.04162	&-	&-	&-\\
$x_{43}$	&-	&-	&-	&-	&-\\
$x_{44}$	&-	&0.00561	&0.01148	&-	&-\\
$x_{45}$	&-	&0.00770	&-	&-	&-\\
$x_{46}$	&-	&-	&-	&-	&-\\
$x_{47}$	&-	&-	&-	&-	&-\\
$x_{48}$	&-0.07316	&0.03138	&-	&-	&-\\
$x_{49}$	&-	&0.00493	&-0.00183	&-	&-\\
$x_{50}$	&-	&0.01320	&-	&-	&-\\
$x_{51}$	&-0.00076	&-0.00041	&-	&-	&0.03819\\
$x_{52}$	&-	&-	&-	&-	&-\\
$x_{53}$	&-	&-	&-	&-	&-\\
\hline
\end{tabular}
\caption{Fitted model parameters for residential building data (part 1).}\label{RBData1}
\end{center}
\end{table}
\begin{table}[!h]
\footnotesize
\begin{center}
\begin{tabular}{|c|c|c|c||c|c|}
\hline
Features & \multicolumn{3}{c||}{Expert,\ $\sigma = 0.0255$}&\multicolumn{2}{c|}{Gating network}\\
\cline{2-6}
& Exp.1 & Exp.2 & Exp.3 & Gate.1 & Gate.2\\
\hline
$x_{54}$	&-0.00854	&0.00077	&-	&-	&-\\
$x_{55}$	&-	&0.00039	&-	&-	&-\\
$x_{56}$	&-	&-	&-0.11177	&-	&-\\
$x_{57}$	&-	&0.00334	&-	&-	&-\\
$x_{58}$	&0.04779	&0.00405	&0.00733	&0.35226	&-\\
$x_{59}$	&0.06726	&0.03743	&0.02988	&0.08489	&-0.20694\\
$x_{60}$	&0.02520	&0.00128	&0.01473	&-	&-\\
$x_{61}$	&-	&0.00843	&-	&-	&-\\
$x_{62}$	&-	&0.00034	&-	&-	&-\\
$x_{63}$	&-	&-0.00920	&0.01184	&-	&-\\
$x_{64}$	&-	&0.00002	&-	&-	&-\\
$x_{65}$	&-	&-	&-	&-	&-\\
$x_{66}$	&-	&-	&-	&-	&-\\
$x_{67}$	&-0.03840	&-	&0.02505	&-	&-\\
$x_{68}$	&-	&0.00234	&0.00238	&-	&-\\
$x_{69}$	&-	&-	&-	&-	&-\\
$x_{70}$	&0.06026	&0.01750	&0.05879	&-	&-\\
$x_{71}$	&-	&-	&-	&-	&-\\
$x_{72}$	&-	&-0.03636	&-	&-	&-\\
$x_{73}$	&-	&-	&-0.02932	&-	&-\\
$x_{74}$	&-	&-	&-	&-	&-\\
$x_{75}$	&-0.02725	&-0.02474	&-	&-	&-\\
$x_{76}$	&-0.01399	&-0.16005	&-0.08654	&-	&-\\
$x_{77}$	&-	&0.00526	&-	&-	&-\\
$x_{78}$	&-0.05816	&0.02821	&-	&0.01303	&-0.35566\\
$x_{79}$	&-	&-0.00358	&-	&1.12522	&-\\
$x_{80}$	&-0.05416	&-	&-	&-	&-\\
$x_{81}$	&-	&-	&-	&-	&-\\
$x_{82}$	&-	&-	&0.04329	&-	&-\\
$x_{83}$	&-	&-	&-	&-	&-\\
$x_{84}$	&-	&-	&-	&-	&-\\
$x_{85}$	&-	&-	&-	&-	&-\\
$x_{86}$	&-	&0.00783	&-	&-	&-\\
$x_{87}$	&-	&-	&0.01463	&-	&-\\
$x_{88}$	&0.02337	&0.03903	&-	&-	&-\\
$x_{89}$	&-0.04720	&0.00909	&-	&-	&-\\
$x_{90}$	&-	&-	&-	&-	&-\\
$x_{91}$	&-	&-	&-	&-	&-\\
$x_{92}$	&-0.00070	&-0.00626	&-0.00458	&-	&-\\
$x_{93}$	&-	&-	&-	&-	&-\\
$x_{94}$	&-0.00067	&0.00309	&-	&-	&-\\
$x_{95}$	&-	&-0.00925	&-	&-	&-\\
$x_{96}$	&-0.00705	&-0.00656	&-	&-	&0.03610\\
$x_{97}$	&-	&-0.00406	&-	&-	&-\\
$x_{98}$	&-	&0.00714	&0.01911	&0.06610	&-\\
$x_{99}$	&-	&0.00364	&-	&-	&-\\
$x_{100}$	&-	&0.00327	&-	&-	&-\\
$x_{101}$	&-	&0.02858	&0.03974	&-	&-\\
$x_{102}$	&0.01623	&-0.01236	&-	&-	&-\\
$x_{103}$	&-	&-	&-	&-	&-\\
$x_{104}$	&-	&-	&-	&-	&-\\
$x_{105}$	&-	&0.00215	&-	&-	&-\\
$x_{106}$	&-0.00006	&-0.00129	&-	&-	&-\\
$x_{107}$	&-	&0.00851	&-	&-	&-\\
\hline
\end{tabular}
\caption{Fitted model parameters for residential building data (part 2).}\label{RBData2}
\end{center}
\end{table}
\begin{table}[!h]
    \centering
    \begin{tabular}{|c|c|c||c|c|c|c|c|}
    \hline
     & \multicolumn{2}{c||}{Predictive criteria}&\multicolumn{5}{c|}{Number of zero coefficients}\\
    \cline{2-8}
 Method & $R^2$ & MSE & Exp.1 & Exp.2 & Exp.3 & Gate.1 & Gate.2\\
\hline
Proximal Newton & 0.991 & $0.0093_{(.059)}$ & 71 & 38 & 75 & 97 & 101\\
\hline
\end{tabular}
\caption{Results for residential building data set.}
\label{RBResult}
\end{table}
We also provide the correlation and the mean squared error between those values after clustering the data in Table \ref{RBClustering}.
\begin{table}[!h]
    \centering
    \begin{tabular}{|c|c|c||c|c|c|}
\hline
& \multicolumn{2}{c||}{Predictive criteria}&\multicolumn{3}{c|}{Number of observations}\\
\cline{2-6}
Method & $R^2$ & MSE & Class 1 & Class 2 & Class 3\\
\hline
 Proximal Newton & 0.9994 & $0.00064_{(.0018)}$ & 59 & 292 & 21\\
\hline
\end{tabular}
\caption{Results for clustering the residential building data set.}
\label{RBClustering}
\end{table}
For the CPU times, we compare two methods: the coordinate ascent algorithm (CA) and the proximal Newton method (PN). We test these algorithms on different data sets. 
The first one is the one of $100$ data sets used for the simulation study. With this data set, we run these algorithms $10$ times and the number of clusters $K=2$ and $K=3$. 
\\
The second data set is the baseball salaries. 
Finally, we also consider the residential building data set as a further comparison with the proximal Newton method. 
%
The computer used for this work has CPU Intel i5-6500T 2.5GHz with 16GB RAM. 
The obtained results are given in Table \ref{CPU}.
We can see that the algorithm for the residential data which has a quite high number of features, requires only few minutes and is thus has a very reasonable speed, and for moderate dimensional problems, is very fast. 
\begin{table}[!h]
\footnotesize
    \centering
    \begin{tabular}{|c|c|c|c|c|c|}
\hline
Data & No. features & No. observations & No. experts & CA & PN\\
\hline
Simulation &$7$ & $300$ & $2$ & $45.34_{(14.28)}$ (s) & $5.03_{(1.09)}$ (s)\\
Simulation &$7$ & $300$ & $3$ & $7.94_{(13.22)}$ (m) & $20.52_{(9.23)}$ (s)\\
Baseball salaries & $33$ & $337$ & $2$ & $17.9_{(15.87)}$ (m) & $46.76_{(21.02)}$ (s)\\
Residential Data & $108$ & $372$ & $3$ & N/A & $3.63_{(0.58)}$ (m)\\
\hline
\end{tabular}
\caption{Results for CPU times.}
\label{CPU}
\end{table}
\\

\noindent{\bf An experiment for $d > n$:} To consider the high-dimensional setting, we take the first $n=90$ observations of the  residential building data  with all the $d=108$ features. 
We use a mixture of three experts and provide the results by applying the 
 proximal Newton method of the algorithm.
%
 The parameter estimation results are provided in Table \ref{ResizeRBData1} and Table \ref{ResizeRBData2}. 
The results in terms of correlation and the mean squared error between the true value V-9 and its prediction, are given in Table \ref{ResizeRBResult} and  Table \ref{ResizeRBClustering}.
\begin{table}[!h]
    \centering
    \begin{tabular}{|c|c|c||c|c|c|c|c|}
    \hline
     & \multicolumn{2}{c||}{Predictive criteria}&\multicolumn{5}{c|}{Number of zero coefficients}\\
    \cline{2-8}
 Method & $R^2$ & MSE & Exp.1 & Exp.2 & Exp.3 & Gate.1 & Gate.2\\
\hline
Proximal Newton & 0.9895 & $0.0204_{(.056)}$ & 31 & 60 & 55 & 106 & 104\\
\hline
\end{tabular}
\caption{Results for the subset of the residential building data set.}
\label{ResizeRBResult}
\end{table}

\begin{table}[!h]
    \centering
    \begin{tabular}{|c|c|c||c|c|c|}
\hline
& \multicolumn{2}{c||}{Predictive criteria}&\multicolumn{3}{c|}{Number of observations}\\
\cline{2-6}
Method & $R^2$ & MSE & Class 1 & Class 2 & Class 3\\
\hline
 Proximal Newton & 0.9999 & $0.00025_{(.0014)}$ & 63 & 11 & 16\\
\hline
\end{tabular}
\caption{Results for clustering the subset of residential building data set.}
\label{ResizeRBClustering}
\end{table}

\begin{table}[!h]
\footnotesize
\begin{center}
\begin{tabular}{|c|c|c|c||c|c|}
\hline
Features & \multicolumn{3}{c||}{Expert,\ $\sigma = 0.0159$}&\multicolumn{2}{c|}{Gating network}\\
\cline{2-6}
& Exp.1 & Exp.2 & Exp.3 & Gate.1 & Gate.2\\
\hline
$x_{0}$	&0.09048	&0.21992	&0.05460	&0.73646	&-0.54048\\
$x_{1}$	&-	&-	&-	&-	&-\\
$x_{2}$	&0.00837	&-	&0.00112	&-	&-\\
$x_{3}$	&-	&-	&-	&-	&-\\
$x_{4}$	&0.04498	&0.07325	&0.00001	&-	&-\\
$x_{5}$	&0.08075	&0.00807	&0.00010	&-	&-\\
$x_{6}$	&-0.00836	&-	&-0.02235	&0.02205	&-\\
$x_{7}$	&0.01337	&-0.00009	&-0.00922	&-	&-\\
$x_{8}$	&0.02375	&0.00443	&0.00668	&-	&-\\
$x_{9}$	&0.02194	&0.00379	&-0.03344	&-	&-\\
$x_{10}$	&-0.01305	&-0.00079	&0.00560	&-	&-\\
$x_{11}$	&0.12763	&0.01256	&0.08537	&-	&0.16264\\
$x_{12}$	&1.08977	&0.72843	&1.04263	&-	&-\\
$x_{13}$	&0.00171	&0.09792	&-	&-	&-\\
$x_{14}$	&-0.03158	&-	&-	&-	&-\\
$x_{15}$	&-	&-	&-0.00001	&-	&-\\
$x_{16}$	&-0.02218	&0.00987	&-0.00527	&-	&-\\
$x_{17}$	&-	&-	&-	&-	&-\\
$x_{18}$	&-	&-	&-0.10258	&-	&-\\
$x_{19}$	&-0.06036	&-	&-	&-	&-\\
$x_{20}$	&0.03513	&-	&-0.00602	&-	&-\\
$x_{21}$	&0.01947	&0.12495	&0.07810	&-	&-\\
$x_{22}$	&-0.00347	&0.01317	&-	&-	&-\\
$x_{23}$	&-0.03255	&-0.00125	&-	&-	&-\\
$x_{24}$	&-0.06659	&-0.00007	&-	&-	&-\\
$x_{25}$	&0.03478	&-	&0.01314	&-	&-\\
$x_{26}$	&0.01209	&0.03787	&-0.00287	&-	&-\\
$x_{27}$	&-	&-	&-	&-	&-\\
$x_{28}$	&-	&-	&-	&-	&-\\
$x_{29}$	&0.06476	&0.02369	&-0.00461	&-	&-\\
$x_{30}$	&-0.01017	&-0.00813	&0.01805	&-	&-\\
$x_{31}$	&0.03331	&-	&-	&-	&-\\
$x_{32}$	&-0.03870	&0.01708	&-	&-	&-\\
$x_{33}$	&-	&-	&-	&-	&-\\
$x_{34}$	&-	&-	&-	&-	&-\\
$x_{35}$	&0.02278	&-0.02794	&0.01933	&-	&-\\
$x_{36}$	&-	&-	&-	&-	&-\\
$x_{37}$	&-0.09359	&-	&-0.06125	&-	&-\\
$x_{38}$	&-	&-	&-0.00356	&-	&-\\
$x_{39}$	&-0.11611	&-	&-0.01973	&-	&-\\
$x_{40}$	&0.21178	&0.06134	&0.13879	&-	&-\\
$x_{41}$	&0.09095	&-	&-	&-	&-\\
$x_{42}$	&-0.03243	&-	&-	&-	&-\\
$x_{43}$	&-0.00032	&-	&-0.01455	&-	&-\\
$x_{44}$	&-0.01643	&-	&-	&-	&-\\
$x_{45}$	&-0.03152	&0.01812	&-0.02303	&-	&-\\
$x_{46}$	&-	&-	&-	&-	&-\\
$x_{47}$	&-	&-	&-	&-	&-\\
$x_{48}$	&0.13661	&0.00862	&-	&-	&-\\
$x_{49}$	&0.04914	&0.06704	&-	&-	&-\\
$x_{50}$	&0.00424	&-	&-0.02954	&-	&-\\
$x_{51}$	&0.04225	&0.05518	&-0.01411	&-	&-\\
$x_{52}$	&-	&-	&-	&-	&-\\
$x_{53}$	&-0.01697	&-	&-	&-	&-\\
\hline
\end{tabular}
\caption{Fitted model parameters for the subset of residential building data (part 1).}\label{ResizeRBData1}
\end{center}
\end{table}

\begin{table}[!h]
\footnotesize
\begin{center}
\begin{tabular}{|c|c|c|c||c|c|}
\hline
Features & \multicolumn{3}{c||}{Expert,\ $\sigma = 0.0159$}&\multicolumn{2}{c|}{Gating network}\\
\cline{2-6}
& Exp.1 & Exp.2 & Exp.3 & Gate.1 & Gate.2\\
\hline
$x_{54}$	&0.02922	&0.00057	&-0.00501	&-	&-\\
$x_{55}$	&-	&-	&-	&-	&-\\
$x_{56}$	&-	&-0.02272	&0.00131	&-	&-\\
$x_{57}$	&-	&-	&-	&-	&-\\
$x_{58}$	&0.11223	&-	&0.05349	&-	&-\\
$x_{59}$	&0.23868	&-0.00711	&0.07830	&-	&-\\
$x_{60}$	&-0.07807	&-0.05727	&-	&-	&-0.02819\\
$x_{61}$	&-0.06729	&-	&-	&-	&-\\
$x_{62}$	&-0.02121	&-	&-	&-	&-\\
$x_{63}$	&-0.01886	&0.04294	&0.00548	&-	&-\\
$x_{64}$	&-0.01265	&0.02236	&-	&-	&-\\
$x_{65}$	&-	&-	&-	&-	&-\\
$x_{66}$	&-	&-	&-	&-	&-\\
$x_{67}$	&-0.03609	&-	&-	&-	&-\\
$x_{68}$	&-0.07929	&0.01190	&-0.00001	&-	&-\\
$x_{69}$	&-	&-	&-	&-	&-\\
$x_{70}$	&0.09774	&-0.01388	&0.01683	&-	&-\\
$x_{71}$	&-	&-	&-	&-	&-\\
$x_{72}$	&-0.08791	&-	&-	&-	&-\\
$x_{73}$	&-0.06590	&-0.13467	&0.03526	&-	&-\\
$x_{74}$	&0.05718	&-	&-	&-	&-\\
$x_{75}$	&-0.14786	&-0.03133	&-	&-	&-\\
$x_{76}$	&-0.12865	&-0.07620	&-0.09485	&-	&-\\
$x_{77}$	&0.04578	&0.04694	&-	&-	&-\\
$x_{78}$	&0.01510	&0.01860	&0.08887	&-	&-\\
$x_{79}$	&-0.00755	&0.00441	&0.01526	&-	&-0.56947\\
$x_{80}$	&-0.06835	&-	&-	&-	&-\\
$x_{81}$	&-	&-	&-0.00166	&-	&-\\
$x_{82}$	&-0.07267	&-	&-	&-	&-\\
$x_{83}$	&-0.00061	&0.02782	&-	&-	&-\\
$x_{84}$	&-	&-	&-	&-	&-\\
$x_{85}$	&-	&-	&-	&-	&-\\
$x_{86}$	&-0.02223	&0.02194	&0.03417	&-	&-\\
$x_{87}$	&0.00029	&-	&-	&-	&-\\
$x_{88}$	&-	&-	&-	&-	&-\\
$x_{89}$	&-0.06311	&0.03682	&-0.00977	&-	&-\\
$x_{90}$	&-	&-	&-	&-	&-\\
$x_{91}$	&-	&-	&-	&-	&-\\
$x_{92}$	&0.06938	&-0.03040	&-0.00542	&-	&-\\
$x_{93}$	&-	&-	&-	&-	&-\\
$x_{94}$	&0.05246	&-	&-0.00793	&-	&-\\
$x_{95}$	&-0.01214	&-	&-0.00345	&-	&-\\
$x_{96}$	&-	&-0.06544	&-0.00007	&-	&-\\
$x_{97}$	&0.03763	&-	&-	&-	&-\\
$x_{98}$	&0.04560	&0.04346	&0.00717	&-	&-\\
$x_{99}$	&0.03892	&-	&-0.01578	&-	&-\\
$x_{100}$	&0.01633	&-	&-0.01509	&-	&-\\
$x_{101}$	&0.04869	&0.01218	&0.00076	&-	&-\\
$x_{102}$	&-0.01996	&-	&-	&-	&-\\
$x_{103}$	&-	&-	&-	&-	&-\\
$x_{104}$	&-	&-	&-	&-	&-\\
$x_{105}$	&-0.00248	&-	&-	&-	&-\\
$x_{106}$	&-0.00344	&-0.03221	&0.01461	&-	&-\\
$x_{107}$	&-0.00779	&-0.01415	&0.00106	&-	&-\\
\hline
\end{tabular}
\caption{Fitted model parameters for the subset of the residential building data (part 2).}\label{ResizeRBData2}
\end{center}
\end{table}
From these Tables we can see that, in this high-dimensional setting, we still obtain acceptable results for the regularized MoE models and the EM algorithm using the proximal Newton method is a good tool for the parameter estimation. 
%
The running time in this experiment is about only few ($\sim 8$) minutes  and the algorithm is quite effective in this setting. 

\section{Conclusion and future work}\label{Sec:Con}
In this paper, we proposed a regularized MLE for the MoE model which encourages sparsity, and developed two versions of a blockwise EM algorithm to monotonically maximize this regularized objective towards at least a local maximum.
The proposed regularization does not require using approximations as in standard MoE regularization. 
The proposed algorithms are based on univariate updates of the model parameters via and MM and coordinate ascent, which allows to tackle matrix inversion problems and obtain sparse solutions.
The results 
in terms of
parameter estimation, the estimation of the actual support of the sparsity, and clustering accuracy,
obtained on simulated and three real data sets, 
confirm the effectiveness of our proposal at least for problems of moderate dimension. Namely, the model sparsity does not include significant bias in terms of parameter estimation nor in terms of recovering the actual clusters of the heterogeneous data. 
The obtained models with the proposed approach are sparse which promote its scalability to high-dimensional problems.
The hybrid EM/MM algorithm is a potential approach. However, this model should be considered carefully, especially for non-smooth penalty functions. The coordinate ascent approach for maximizing the M-step, however, works quite well although, while we do not have the closed form update in this situation.
A proximal Newton extension is possible to obtain closed form solutions for an approximate of the M-step as an efficient method that is promoted to deal with high-dimensional data sets. 
First experiments on an example of a quite high-dimensional scenario with a subset of real data containing $90$ observations and $108$ features provide encouraging results. 
%
A future work will consist in investigating more the high-dimensional setting, and performing additional model selection experiments as well as considering hierarchical MoE and MoE for discrete data. 

\section*{\label{Appendix}Appendix}
The proposed EMM algorithm maximizes the penalised log-likelihood function (\ref{eq:PenLoglik MoE}).
To show that the penalized log-likelihood is monotonically improved, that is 
\begin{equation}
PL(\bstheta^{[q+1]}) \geq PL(\bstheta^{[q]}),
\end{equation}
 we need to show that 
\begin{eqnarray}
Q(\bstheta^{[q+1]},\bstheta^{[q]}) \geq Q(\bstheta^{[q]},\bstheta^{[q]}).
\label{eq: Q(theta q+1)>=Q(theta q)}
\end{eqnarray}
Indeed, as in the standard EM algorithm algorithm for the non-penalised maximum likelihood estimation, by applying Bayes theorem we have
\begin{eqnarray}
\log PL(\bstheta)=  \log PL_c (\bstheta) - \log p(\bz|\cD;\bstheta), 
 \end{eqnarray}
and by taking the conditional expectation with respect to the latent variables $\bz$, given the observed data $\cD$ and the current parameter estimation $\bstheta^{[q]}$, the conditional expectation of the penalised completed-data log-likelhood is given by:
\begin{eqnarray}
 \E \left[\log PL(\bstheta)|\cD,\bstheta^{[q]}\right] = \E \left[\log PL_c (\bstheta) |\cD,\bstheta^{[q]}\right] - \E \left[\log p(\bz|\cD;\bstheta) |\cD,\bstheta^{[q]}\right].
\label{eq: Expected comp loglik decomposition}
\end{eqnarray} 
Since the penalised log-likelihood function $\log PL(\bstheta)$ does not depend on the variables $\bz$, its expectation with respect to $\bz$ therefore still unchanged and we get the following relation: 
\begin{equation}
\log PL(\bstheta) = \underbrace{\E \left[\log PL_c (\bstheta)|\cD,\bstheta^{[q]}\right]}_{Q(\bstheta,\bstheta^{[q]})} - \underbrace{\E \left[\log p \big(\bz|\cD;\bstheta \big)|\cD,\bstheta^{[q]}\right]}_{H(\bstheta,\bstheta^{[q]})}.
\label{eq: decomposition of L into Q and H}
\end{equation}
Thus, the value of change of the 
penalised log-likelihood function 
between two successive iterations is given by:%
{\footnotesize 
\begin{eqnarray}
\log PL(\bstheta^{[q+1]}) - \log PL(\bstheta^{[q]}) =  \Big(Q(\bstheta^{[q+1]},\bstheta^{[q]}) - Q(\bstheta^{[q]},\bstheta^{[q]})\Big)  - \Big( H(\bstheta^{[q+1]},\bstheta^{[q]}) - H(\bstheta^{[q]},\bstheta^{[q]}) \Big). 
\label{eq: L[q+1] - L[q]}
\end{eqnarray}
}As in the standard EM algorithm, it can be easily shown, by using Jensen' inequality,  that the second term $H(\bstheta^{[q+1]},\bstheta^{[q]}) - H(\bstheta^{[q]},\bstheta^{[q]})$ in the r.h.s of (\ref{eq: L[q+1] - L[q]}) is negative
and we therefore just need to show that the first term 
$Q(\bstheta^{[q+1]},\bstheta^{[q]}) - Q(\bstheta^{[q]},\bstheta^{[q]})$ is positive.


In the following, we show that 
$Q(\bstheta^{[q+1]},\bstheta^{[q]}) \geq Q(\bstheta^{[q]},\bstheta^{[q]})$.
First, the $Q$-function is decomposed as
\begin{equation}
Q(\bstheta;\bstheta^{[q]}) = Q(\bsw;\bstheta^{[q]}) + Q(\{\bsbeta_k, \sigma_k^2\};\bstheta^{[q]})
\end{equation}
and is accordingly maximized separately w.r.t. $\bsw$, $\{\bsbeta_k\}$ and $\{\sigma_k^2\}$. 

To update $\bsw$, first we use a univariate MM algirthm to iteratively maximize the minorizing function
%
$G(\bsw|\bsw^{[q]})$ for $Q(\bsw;\bstheta^{[q]})$ 
which satisfies
\begin{equation}
Q(\bsw;\bstheta^{[q]}) \ge    G(\bsw|\bsw^{[q]}), \forall \bsw
\end{equation}
and 
\begin{equation}
Q(\bsw^{[q]};\bstheta^{[q]})= G(\bsw^{[q]}|\bsw^{[q]}).
\end{equation} In our situation, the minorizing function is concave and has a separate structure. We thus use a one-dimensional Newton Raphson algorithm to maximize it. 
Thus, the solution $\bsw^{[q+1]}$ guarantees
\begin{equation}
G(\bsw^{[q+1]}|\bsw^{[q]})\ge G(\bsw^{[q]}|\bsw^{[q]})
\end{equation}
and hence we have  
\begin{equation}
Q(\bsw^{[q+1]};\bstheta^{[q]})\ge G(\bsw^{[q+1]}|\bsw^{[q]})\ge G(\bsw^{[q]}|\bsw^{[q]}) =  Q(\bsw^{[q]};\bstheta^{[q]}).
\label{eq: MM update Q(w q+1)>=Q(w q)}
\end{equation}
Hence, the MM algorithm leads to the improvement of the value of the $Q(\bsw;\bstheta^{[q]})$ function.

For the second version of the EM algorithm which uses the coordinate ascent algorithm to update $\bsw$, we rely on the work of \cite{Tse88} and \cite{Tse01}, where it is proved that, if the nonsmooth part of $Q(\bsw;\bstheta^{[q]})$ has a separate structure, the coordinate ascent algorithm is successful in finding the 
$\bsw^{[q+1]} = \arg\max\limits_{\bsw}Q(\bsw;\bstheta^{[q]})$. 
At each step of the coordinate ascent algorithm, within the M-step of the EM algorithm, we  iteratively update the $j$th component, while fixing the other parameters to their previous values:
\begin{equation}
w_{kj}^{[q,s+1]} = \arg\max\limits_{w_{kj}} Q(w_{kj};\bstheta^{[q,s]}),
\end{equation}
$s$ being the current iteration of the coordinate ascent algorithm. 
The function $Q(w_{kj};\bstheta^{[q]})$ is concave, and the used iterative procedure to find $w_{kj}^{[q+1]}$ is the Newton Raphson algorithm. 
Hence, the coordinate ascent leads to the improvement of the function $Q(\bsw;\bstheta^{[q]})$, that is
\begin{equation}
Q(\bsw^{[q+1]};\bstheta^{[q]})\ge Q(\bsw^{[q]};\bstheta^{[q]}).
\label{eq: CA update Q(w q+1)>=Q(w q)}
\end{equation} 

The updates of the experts' parameters $\{\bsbeta\}$ and $\{\sigma^2 \}$ are performed by separate maximizations of $Q(\bsbeta,\sigma^2;\bstheta^{[q]})$. This function is concave and has the quadratic form. Hence, the coordinate ascent algorithm with soft-thresholding operator is successful to provide the updates
\begin{equation}
\bsbeta^{[q+1]} = \arg\max\limits_{\bsbeta} Q(\bsbeta,\sigma^{[q]};\bstheta^{[q]}),
\end{equation}
and \begin{equation}
\sigma^{[q+1]} = \arg\max_{\sigma} Q(\bsbeta^{[q+1]},\sigma;\bstheta^{[q]})
\end{equation}
and thus we have 
\begin{equation}
Q(\bsbeta^{[q+1]};\bstheta^{[q]})\ge Q(\bsbeta;\bstheta^{[q]}) \ge Q(\bsbeta^{[q]};\bstheta^{[q]}),
\label{eq: EMM update Q(beta q+1)>=Q(theta q)}
\end{equation}
and
\begin{equation}
Q(\sigma^{[q+1]};\bstheta^{[q]})\ge Q(\sigma;\bstheta^{[q]}) \ge Q(\sigma^{[q]};\bstheta^{[q]}).
\label{eq: EMM update Q(sigma q+1)>=Q(theta q)}
\end{equation} \\

Equations 
(\ref{eq: MM update Q(w q+1)>=Q(w q)}),
(\ref{eq: CA update Q(w q+1)>=Q(w q)}),
(\ref{eq: EMM update Q(beta q+1)>=Q(theta q)}), and
(\ref{eq: EMM update Q(sigma q+1)>=Q(theta q)})
show that $(\ref{eq: Q(theta q+1)>=Q(theta q)})$ holds, and hence the penalised log-likelihood in monotonically increased by the proposed algorithm.  

\section*{Acknowledgements}
The authors would like to thank the R\'egion Normandie for the financial support of this research via the research project RIN AStERiCs. 
The authors would also like to very much thank the anonymous reviewers and the editor for their comments who helped to improve the manuscript.

\bibliography{RMoE}

\begin{thebibliography}{}

\bibitem[Celeux et~al., 2017]{Celeux2017}
Celeux, G., Maugis-Rabusseau, C., and Sedki, M. (2017).
\newblock Variable selection in model-based clustering and discriminant
  analysis with a regularization approach.
\newblock {\em arXiv preprint arXiv:1705.00946}.

\bibitem[Dempster et~al., 1977]{Dem77}
Dempster, A.~P., Laird, N.~M., and Rubin, D.~B. (1977).
\newblock Maximum likelihood from incomplete data via the em algorithm.
\newblock {\em J. of the royal statistical society. Series B}, pages 1--38.

\bibitem[Devijver, 2015]{Dev15}
Devijver, E. (2015).
\newblock An $\ell_1$-oracle inequality for the lasso in multivariate finite
  mixture of multivariate gaussian regression models.
\newblock {\em ESAIM: Probability and Statistics}, 19:649--670.

\bibitem[Fan and Li, 2001]{Fan01}
Fan, J. and Li, R. (2001).
\newblock Variable selection via nonconcave penalized likelihood and its oracle
  properties.
\newblock {\em Journal of the American statistical Association},
  96(456):1348--1360.

\bibitem[Fraley and Raftery, 2005]{Fraley2005}
Fraley, C. and Raftery, A.~E. (2005).
\newblock Bayesian regularization for normal mixture estimation and model-based
  clustering.
\newblock Technical report, Washington Univ Seattle Dept of Statistics.

\bibitem[Fraley and Raftery, 2007]{Fraley2007}
Fraley, C. and Raftery, A.~E. (2007).
\newblock Bayesian regularization for normal mixture estimation and model-based
  clustering.
\newblock {\em Journal of classification}, 24(2):155--181.

\bibitem[Friedman et~al., 2010]{Friedman10}
Friedman, J., Hastie, T., and Tibshirani, R. (2010).
\newblock Regularization paths for generalized linear models via coordinate
  descent.
\newblock {\em Journal of statistical software}, 33(1):1.

\bibitem[Fr\"{u}hwirth-Schnatter, 2006]{SylviaFruhwirthBook2006}
Fr\"{u}hwirth-Schnatter, S. (2006).
\newblock {\em Finite Mixture and Markov Switching Models (Springer Series in
  Statistics)}.
\newblock Springer Verlag, New York.

\bibitem[Hastie et~al., 2015]{TH15c}
Hastie, T., Tibshirani, R., and Wainwright, M. (2015).
\newblock {\em Statistical Learning with Sparsity: The Lasso and
  Generalizations}.
\newblock Taylor \& Francis.

\bibitem[Hui et~al., 2015]{Hui15}
Hui, F.~K., Warton, D.~I., Foster, S.~D., et~al. (2015).
\newblock Multi-species distribution modeling using penalized mixture of
  regressions.
\newblock {\em The Annals of Applied Statistics}, 9(2):866--882.

\bibitem[Hunter and Li, 2005]{Hun05}
Hunter, D.~R. and Li, R. (2005).
\newblock Variable selection using $mm$ algorithms.
\newblock {\em Annals of statistics}, 33(4):1617.

\bibitem[Jacobs et~al., 1991]{Jac91}
Jacobs, R.~A., Jordan, M.~I., Nowlan, S.~J., and Hinton, G.~E. (1991).
\newblock Adaptive mixtures of local experts.
\newblock {\em Neural computation}, 3(1):79--87.

\bibitem[Jiang and Tanner, 1999]{Jia99}
Jiang, W. and Tanner, M.~A. (1999).
\newblock Hierarchical mixtures-of-experts for exponential family regression
  models: approximation and maximum likelihood estimation.
\newblock {\em Annals of Statistics}, pages 987--1011.

\bibitem[Khalili, 2010]{Kha10}
Khalili, A. (2010).
\newblock New estimation and feature selection methods in mixture-of-experts
  models.
\newblock {\em Canadian Journal of Statistics}, 38(4):519--539.

\bibitem[Khalili and Chen, 2007]{Kha07}
Khalili, A. and Chen, J. (2007).
\newblock Variable selection in finite mixture of regression models.
\newblock {\em Journal of the American Statistical association},
  102(479):1025--1038.

\bibitem[Lange, 2013]{Lan13}
Lange, K. (2013).
\newblock {\em Optimization (2nd edition)}.
\newblock Springer.

\bibitem[Law et~al., 2004]{Law04}
Law, M.~H., Figueiredo, M.~A., and Jain, A.~K. (2004).
\newblock Simultaneous feature selection and clustering using mixture models.
\newblock {\em IEEE transactions on pattern analysis and machine intelligence},
  26(9):1154--1166.

\bibitem[Lee et~al., 2014]{Lee14}
Lee, J.~D., Sun, Y., and Saunders, M.~A. (2014).
\newblock Proximal newton-type methods for minimizing composite functions.
\newblock {\em SIAM Journal on Optimization}, 24(3):1420--1443.

\bibitem[Lee et~al., 2006]{S.Lee06}
Lee, S.-I., Lee, H., Abbeel, P., and Ng, A.~Y. (2006).
\newblock Efficient $l_1$ regularized logistic regression.
\newblock In {\em AAAI}, volume~6, pages 401--408.

\bibitem[Lloyd-Jones et~al., 2018]{Llo16}
Lloyd-Jones, L.~R., Nguyen, H.~D., and McLachlan, G.~J. (2018).
\newblock A globally convergent algorithm for lasso-penalized mixture of linear
  regression models.
\newblock {\em Computational Statistics \& Data Analysis}, 119:19 -- 38.

\bibitem[Maugis et~al., 2009a]{Maugis2009a}
Maugis, C., Celeux, G., and Martin-Magniette, M.-L. (2009a).
\newblock Variable selection for clustering with gaussian mixture models.
\newblock {\em Biometrics}, 65(3):701--709.

\bibitem[Maugis et~al., 2009b]{Maugis2009b}
Maugis, C., Celeux, G., and Martin-Magniette, M.-L. (2009b).
\newblock Variable selection in model-based clustering: A general variable role
  modeling.
\newblock {\em Computational Statistics \& Data Analysis}, 53(11):3872--3882.

\bibitem[McLachlan and Peel., 2000]{McLachlan2000FMM}
McLachlan, G.~J. and Peel., D. (2000).
\newblock {\em Finite mixture models}.
\newblock New York: Wiley.

\bibitem[Meynet, 2013]{Mey13}
Meynet, C. (2013).
\newblock An $\ell_1$-oracle inequality for the lasso in finite mixture
  gaussian regression models.
\newblock {\em ESAIM: Probability and Statistics}, 17:650--671.

\bibitem[Nguyen and Chamroukhi, 2018]{NguyenChamroukhi-MoE}
Nguyen, H.~D. and Chamroukhi, F. (2018).
\newblock Practical and theoretical aspects of mixture-of-experts modeling: An
  overview.
\newblock {\em Wiley Interdisciplinary Reviews: Data Mining and Knowledge
  Discovery}, pages e1246--n/a.
\newblock https://arxiv.org/abs/1707.03538v1.

\bibitem[Pan and Shen, 2007]{Pan2007}
Pan, W. and Shen, X. (2007).
\newblock Penalized model-based clustering with application to variable
  selection.
\newblock {\em Journal of Machine Learning Research}, 8(May):1145--1164.

\bibitem[Raftery and Dean, 2006]{Raftery06}
Raftery, A.~E. and Dean, N. (2006).
\newblock Variable selection for model-based clustering.
\newblock {\em Journal of the American Statistical Association},
  101(473):168--178.

\bibitem[Schifano et~al., 2010]{Schi10}
Schifano, E.~D., Strawderman, R.~L., Wells, M.~T., et~al. (2010).
\newblock Majorization-minimization algorithms for nonsmoothly penalized
  objective functions.
\newblock {\em Electronic Journal of Statistics}, 4:1258--1299.

\bibitem[Snoussi and Mohammad-Djafari, 2005]{Snoussi2005}
Snoussi, H. and Mohammad-Djafari, A. (2005).
\newblock Degeneracy and likelihood penalization in multivariate gaussian
  mixture models.
\newblock {\em Univ. of Technology of Troyes, Troyes, France, Tech. Rep. UTT}.

\bibitem[St{\"a}dler et~al., 2010]{Sta10}
St{\"a}dler, N., B{\"u}hlmann, P., and Van De~Geer, S. (2010).
\newblock $l$1-penalization for mixture regression models.
\newblock {\em Test}, 19(2):209--256.

\bibitem[Stephens and Phil, 1997]{Stephens1997}
Stephens, M. and Phil, D. (1997).
\newblock Bayesian methods for mixtures of normal distributions.

\bibitem[Tibshirani, 1996]{Tib96}
Tibshirani, R. (1996).
\newblock Regression shrinkage and selection via the lasso.
\newblock {\em Journal of the Royal Statistical Society. Series B}, pages
  267--288.

\bibitem[Titterington et~al., 1985]{TitteringtonBookMixtures}
Titterington, D., Smith, A., and Makov, U. (1985).
\newblock {\em Statistical Analysis of Finite Mixture Distributions}.
\newblock John Wiley \& Sons.

\bibitem[Tseng, 1988]{Tse88}
Tseng, P. (1988).
\newblock Coordinate ascent for maximizing nondifferentiable concave functions.
\newblock Technical report, Massachusetts Institute of Technology, Laboratory
  for Information and Decision Systems].

\bibitem[Tseng, 2001]{Tse01}
Tseng, P. (2001).
\newblock Convergence of a block coordinate descent method for
  nondifferentiable minimization.
\newblock {\em Journal of optimization theory and applications},
  109(3):475--494.

\bibitem[Witten and Tibshirani, 2010]{Witten2010}
Witten, D.~M. and Tibshirani, R. (2010).
\newblock A framework for feature selection in clustering.
\newblock {\em Journal of the American Statistical Association},
  105(490):713--726.

\end{thebibliography}


\end{document}